\definecolor{backcolour}{rgb}{0.95,0.95,0.95}
\newtheorem{proposition}{Proposition}
\title{Capturing Individual Human Preferences with Reward Features}
\author{%
  Andr\'e Barreto \\
  Google DeepMind
  \And
  Vincent Dumoulin \\
  Google DeepMind
  \And
  Yiran Mao \\
  Google DeepMind
  \AND 
  Mark Rowland \\
  Google DeepMind
  \And 
  Nicolas Perez-Nieves \\
  Google DeepMind
  \And 
  Bobak Shahriari \\
  Google DeepMind
  \AND
  Yann Dauphin \\
  Google DeepMind
  \And
  Doina Precup \\
  Google DeepMind
  \And 
  Hugo Larochelle \\
  Google DeepMind
}
\begin{document}

\maketitle

\begin{abstract}
Reinforcement learning from human feedback usually models preferences using a reward function that does not distinguish between people. We argue that this is unlikely to be a good design choice in contexts with high potential for disagreement, like in the training of large language models. We formalise and analyse the problem of learning a reward model that can be specialised to a user. Using the principle of empirical risk minimisation, we derive a probably approximately correct (PAC) bound showing the dependency of the approximation error on the number of training examples, as usual, and also on the number of human raters who provided feedback on them. Based on our theoretical findings, we discuss how to best collect pairwise preference data and argue that adaptive reward models should be beneficial when there is considerable disagreement among users. We also propose a concrete architecture for an adaptive reward model. Our approach leverages the observation that individual preferences can be captured as a linear combination of a set of general reward features. We show how to learn such features and subsequently use them to quickly adapt the reward model to a specific individual, even if their preferences are not reflected in the training data. We present experiments with large language models illustrating our theoretical results and comparing the proposed architecture with a non-adaptive baseline. Consistent with our analysis, the benefits provided by our model increase with the number of raters and the heterogeneity of their preferences. We also show that our model compares favourably to adaptive counterparts, including those performing in-context personalisation.
\end{abstract}

\newcommand{\cp}{\citep}
\newcommand{\ca}{\citeauthor}
\newcommand{\cy}{\citeyear}
\newcommand{\ct}{\citet}
\newcommand{\ctp}[1]{\ca{#1}'s~\cp{#1}}
\newcommand{\cwp}[1]{\ca{#1}, \cy{#1}}
\newcommand{\ctt}[1]{\ca{#1}~\cp{#1}} 

\newcommand{\mat}[1]{\ensuremath{\boldsymbol{\mathrm{#1}}}}
\newcommand{\set}[1]{\ensuremath{\mathcal{#1}}}
\newcommand{\mbb}[1]{\ensuremath{\mathbb{#1}}}

\newcommand{\defi}{\coloneq}
\newcommand{\ra}{\ensuremath{\rightarrow}}
\newcommand{\vb}{\ensuremath{\,|\,}}
\newcommand{\ind}{\mbb{1}}
\newcommand{\argmax}[2]{\ensuremath{\mathrm{arg\,max}_{#1}{#2}}}
\newcommand{\argmin}[2]{\ensuremath{\mathrm{arg\,min}_{#1}{#2}}}
\renewcommand{\dot}[2]{\ensuremath{\langle  #1, #2 \rangle}}

\newcommand{\vphi}{\mat{\phi}}
\newcommand{\vdelta}{\mat{\delta}}
\newcommand{\E}{\mbb{E}}
\newcommand{\w}{\mat{w}}
\newcommand{\W}{\mat{W}}
\newcommand{\vomega}{\mat{\omega}}
\newcommand{\D}{\set{D}}
\newcommand{\X}{\set{X}}
\newcommand{\NN}{\set{N}}
\newcommand{\vtheta}{\mat{\theta}}
\newcommand{\T}{\set{T}}
\renewcommand{\L}{\set{L}}
\newcommand{\R}{\mbb{R}}
\newcommand{\Y}{\set{Y}}
\newcommand{\rfm}{\ensuremath{\vphi_{\theta}}}
\newcommand{\vmu}{\mat{\mu}}
\renewcommand{\H}{\ensuremath{\mathcal{H}}}
\newcommand{\hH}{\ensuremath{\hat{\H}}}
\newcommand{\hh}{\ensuremath{\hat{h}}}
\newcommand{\e}{\ensuremath{\mat{e}}}
\newcommand{\Z}{\ensuremath{\mathcal{Z}}}
\newcommand{\C}{\ensuremath{\mathcal{C}}}
\renewcommand{\P}{\ensuremath{\mathbb{P}}}
\newcommand{\K}{\ensuremath{\mathcal{K}}}
\renewcommand{\k}{\mat{k}}
\newcommand{\z}{\mat{z}}
\newcommand{\Calpha}{\ensuremath{\C_{\alpha}}}
\newcommand{\Var}{\mbb{V}}
\newcommand{\distl}{\ensuremath{d_{\ell}}}
\newcommand{\hc}{\ensuremath{\hat{c}}}
\newcommand{\expvar}[1]{\ensuremath{\E[\Var(\ell_{#1}|H)]}}
\newcommand{\varexp}[1]{\ensuremath{\Var(\E[\ell_{#1}|H])}}
\newcommand{\const}{\ensuremath{g}}
\newcommand{\Cbase}{\ensuremath{\C_0}}
\newcommand{\Sbase}{\ensuremath{S_0}}
\newcommand{\n}{\ensuremath{\hat{n}}}
\graphicspath{ {./fig/} }

\section{Introduction}
\label{sec:introduction}

Reinforcement learning from human feedback (RLHF) can be useful when we do not have an explicit reward function but can distinguish between good and bad behaviour~\cp{christiano2017deep}. This is often the case in the training of large language models (LLMs), in which RLHF has been applied with great success~\cp{radford2018improving,ramachandran2016unsupervised,ouyang2022training,wei2022finetuned}.

The most common way to collect human feedback is to ask humans to rank examples of an agent's behaviour. Usually, all the feedback is integrated to derive a single reward function that reflects as well as possible the preferences of the population of interest. Although this is a sensible strategy when preferences are homogeneous across the population, it may be less effective when there is considerable disagreement. This is likely the case in the training of LLMs.

As an illustration, suppose that, given a pair of alternative responses to a subjective question, $51\%$ of the target audience prefer the first option while the remaining $49\%$ prefer the second. If we do not distinguish between users, we are left with two options: either we pick the preferred answer and leave $49\%$ of the users unhappy $100\%$ of the time, or we sample the answers proportionally to how often they are preferred and leave $100\%$ of the users unhappy approximately half of the time. Both solutions are clearly unsatisfactory.

We need reward models that can be specialised to users. Crucially, we want the model to be able to adapt to people outside of the group who provided feedback for training. In this paper we formalise and analyse the problem of learning a reward model with this ability. Using the principle of empirical risk minimisation, we derive a probably approximately correct (PAC) bound that shows how the approximation error depends not only on the number of training examples, but also on the number of human raters who provided feedback on them. This is an interesting problem from a theoretical standpoint because the data involved is not identically and independently distributed (i.i.d.). The resulting analysis provides a formal framework to discuss strategies for pairwise preference data collection and to assess the trade-offs associated with the use of an adaptive reward model.

We then turn our attention to the question of how to adapt a reward model to a new user in practice. Generalising to unseen users is challenging because it requires one to capture the subjective criteria that underlie human preferences---a non-trivial endeavour, as humans often cannot fully articulate the reasons why they prefer one behaviour over the other. A possible approach is to leverage precisely the type of pairwise preference data used in RLHF, since it allows the preference criteria to be indirectly elicited rather than explicitly spelled out. Building on our theoretical findings, we propose a simple, principled method to unveil the preference criteria underlying the RLHF data. 

Our model leverages the fact that individual preferences can be captured as a linear combination of a set of general \emph{reward features}. These features are unknown (in some cases to the person themself), so we need a systematic way to extract them from the data. We show how this can be accomplished using a very simple architecture and training. During the regular RLHF process, we use data coming from different individuals to learn common features that capture the preferences of the group. When the reward model is being specialised to an unknown user, the features are frozen, and only the coefficients of the linear combination must be learned. This results in a simple classification problem that can be reliably solved with a few training examples provided by the user.
\vspace{-1mm}
\section{Background}
\label{sec:background}
\vspace{-2mm}
Our goal is to use human preference data to learn a reward function that can be used in RL to learn a policy or LLM~\cp{christiano2017deep,sutton2018reinforcement}. We will focus on and adopt the terminology from LLMs, although our analysis and proposed approach are more general. Let \T\ be a finite set of symbols called \emph{tokens}, let $\X \defi \cup_{i=0}^{l_x} \T^i$ be the \emph{context space}, where $l_x$ is the maximum context length, and let $\Y \defi \cup_{i=0}^{l_y} \T^i$ be the \emph{response space}. An LLM is a mapping from \X\ to a distribution over \Y. Let \H\ be the set of \emph{human users} we are interested in. Given a context $x \in \X$ and two responses $(y, y') \in \Y^2$, we can ask a user $h \in \H$ for their preferred option. We use $y \succ y' | x, h$ to indicate that $h$ prefers $y$ over $y'$ given $x$~\cp{rafailov2024direct,azar2024general}. We encode this outcome with the value $z=1$, and encode the opposite preference as $z=0$. We want to model humans' preferences as accurately as possible.

Define $\Z \defi \{0, 1\}$ and let $\D$ be a distribution over $\H \times \X \times \Y^2 \times \Z$. We will use subscripts to refer to the marginal distributions of \D\ and superscripts to indicate conditioning. For example, $\D^h_\X$ is the marginal distribution over \X\ conditioned on $H=h$. We will use numeric superscripts to indicate the joint distribution induced by independent draws from the same distribution; for instance, $(\D^h_{\X})^n$ is the distribution over $\X^n$ resulting from $n$ independent draws from $\D^h_{\X}$.

We will formalise the problem through the lens of \emph{empirical risk minimisation}~\cite{shalev2014understanding}. Let $\Cbase \defi \{\X \times \Y^2 \ra [0,1] \}$ be a \emph{classifier set} (or a \emph{hypothesis class}) and let $\ell: [0,1] \times \Z \rightarrow \R$ be an appropriately defined \emph{loss function}. Our goal is to minimise the \emph{generalisation loss} defined as $L_\D(c) \defi \E_{H, X, Y, Y', Z \sim \D} [\ell(c(X, Y, Y'), Z)]$. To do so, we will use data generated as follows. First, a user is sampled, $h \sim \D_{\H}$. Second, $n$ contexts are sampled, $(x_i)_{i=1}^n \sim (\D^h_{\X})^n$. Third, for each context $x_i$, two responses are sampled, $y_i, y'_i \sim (\D^{h, x_i}_{\Y})^2$ . We then show each of the $n$ resulting tuples $(x_i, y_i, y'_i)$ to $h$, who ranks them by sampling $z \sim \D^{h, x_i, y_i, y'_i}_{\Z}$  (we will refer to the users $h$ who ranked the examples as ``raters''). Note that $\D^{h, x_i, y_i, y'_i}_{\Z}$ is a Bernoulli distribution whose mean is $\P(y_i \succ y'_i \vb x_i, h)$, the probability that $h$ will prefer $y_i$ over $y'_i$ given $x_i$. This entire process is repeated $m$ times, resulting in the dataset $\Sbase \defi \{(x_i, y_i, y'_i, z_i)\}_{i=1}^{mn}$. We then use $\Sbase$ to define the \emph{training loss}: $L_{\Sbase}(c) \defi 1/mn \sum_{i=1}^{mn} \ell(c(x_i, y_i, y'_i), z_i)$, which we will minimise as a proxy for $L_\D$~\cite{shalev2014understanding}.

\paragraph{The Bradley-Terry model.} The ultimate objective of the process above is to derive a reward function $r: \X \times \Y \ra \R$ to be used in RL. This is possible if we assume that humans' preferences are generated based on $r$. A common assumption is the \emph{Bradley-Terry model}: $\P(y \succ y' \vb x) = \sigma(r(x,y) - r(x,y'))$, where $\sigma$ is the sigmoid function~\cp{bradley52rank}. Let $r_{\vtheta}: \X \times \Y \ra \R$ be a reward function parameterised by $\vtheta \in \R^e$. This induces a set of classifiers $\Cbase$ whose elements are $c_{\vtheta}(x, y, y') \defi  \sigma(r_{\vtheta}(x,y) - r_{\vtheta}(x,y'))$. We can write the likelihood of $\vtheta$ with respect to $z$ as $\L(\vtheta) = \P(z \vb x, y, y'; \vtheta) = \sigma(r_{\vtheta}(x,y) - r_{\vtheta}(x,y'))^z \sigma(r_{\vtheta}(x,y') - r_{\vtheta}(x,y))^{1-z}$. Since the samples in the dataset $\Sbase$ are i.i.d., the likelihood function of $\vtheta$ given the data can be written as $\L(\vtheta \vb \Sbase) = \prod_{i} \P(z_{i} \vb x_{i}, y_{i}, y'_{i}; \vtheta)$. Thus, if we define
\begin{equation}
\label{eq:loss}
\ell(c_{\vtheta}(x, y, y'), z) \defi - z \log c_{\vtheta}(x, y, y') + (z - 1) \log c_{\vtheta}(x, y', y),
\end{equation}
we naturally have $L_{\Sbase}(c_{\vtheta}) = -\log \L(\vtheta | \Sbase)$ and $L_\D(c_{\vtheta}) = -\log \L(\vtheta)$. The minimisation of $L_{\Sbase}(c_{\vtheta})$ yields a vector of parameters $\tilde{\vtheta}^* \in \R^e$. The corresponding $r_{\tilde{\vtheta}^*}$ can then play the role of the reward function in RL, resulting in an LLM that reflects the preferences of the population \H~\cp{christiano2017deep,sutton2018reinforcement}. 

\section{Reward-model personalisation}
\label{sec:reward_model_personalisation}
\vspace{-1mm}
One of the key assumptions underlying~\eqref{eq:loss} is that
\begin{align}
\label{eq:human_pref}
\P(y \succ y' \vb x) 
 & = \E_{H \sim \D_{\H}}\left[\P(y \succ y' \vb x, H)\right].
\end{align}
That is, we are modelling the probability of response $y$ being preferred over response $y'$ given context $x$ as the average opinion among users \H\ computed according to $\D_{\H}$. Clearly,~\eqref{eq:human_pref} will only be a good assumption if the distribution $\D_{\H}$ is representative of the target users, which may or may not be the case in practice. We call attention to another issue: when we use~\eqref{eq:loss} to derive a reward function, the accompanying assumption~\eqref{eq:human_pref} implies that we are neglecting individual differences in the preferences of the population \H. Formally, we are ignoring the variance of the variable $\bar{z}(H) \defi p(y \succ y' \vb x, H)$.  
\vspace{-5mm}
\paragraph{Preference modelling as a game.} Let us focus on the simple problem of choosing which of two possible responses $y$ and $y'$ should follow a context $x$. An intuitive way to understand the deleterious effect of making this decision based on~\eqref{eq:human_pref} is to think of it as a game. At each round, the player picks either $y$ or $y'$. Then, a user $h \sim \D_{\H}$ is sampled and a reward is drawn from $\D^{x, y, y', h}_\Z$ if $y$ was selected, and from $\D^{x, y', y, h}_\Z$ otherwise. Clearly, the best one can do in this game is to always pick $y$ if $\E \bar{z}(H) \ge 0.5$, and always pick $y'$ otherwise. This results in a expected reward of $\max(\E\bar{z}(H), 1 - \E\bar{z}(H))$. We are proposing to change the game in a way that we only choose between $y$ and $y'$ \emph{after} the user $h$ has been revealed. In this case the optimal strategy is to pick $y$ if $\bar{z}(h) \ge 0.5$, and $y'$ otherwise. This increases the expected reward to $\E \max(\bar{z}(H), 1 - \bar{z}(H)) \ge \max(\E \bar{z}(H), 1 - \E \bar{z}(H))$.

\paragraph{User-aware reward model learning.} As the game example above suggests, our strategy will be to model $\P(y \succ y' \vb x, h)$ instead of $\P(y \succ y' \vb x)$. We will use data similar to that used in conventional RLHF, with a small extra requirement: tuples in the dataset have to be labelled with (non-identifying) rater IDs. We have $S \defi \{\{ (h_i, x_{ij}, y_{ij}, y'_{ij}, z_{ij})\}_{j=1}^n \}_{i=1}^m$, where $h_i$ is effectively an index indicating which rater determined $z_{ij}$ based on $x_{ij}$, $y_{ij}$ and $y'_{ij}$, for $j=1, 2, ..., n$. This is not a strong requirement: we are just surfacing a piece of information used to generate~$\Sbase$.

Given $S$, we can define an augmented set of classifiers $\C \defi \{\H \times \X \times \Y^2 \ra \R \}$ which are now also a function of users $h \in \H$. Let $\hH \subseteq \H$ be the set of human raters $\hh$ who provided feedback for the generation of $S$. We distinguish between two types of generalisation. In \emph{intra-user generalisation} one is interested in minimising $\E_{\D} [\ell(c(H, X, Y, Y'), Z) | H \in \hH]$. That is, although we are able to predict preferences over contexts $x$ and responses $y$ beyond the training data, we are unable to extrapolate to users not in the set of raters \hH. In \emph{inter-user generalisation} one is interested in minimising $L_\D \defi \E_\D[\ell(c(H, X, Y, Y'), Z)]$. That is, we are able to predict the preferences of \emph{any user} in \H\ over the entire set $\X \times \Y^2$. Our focus is on inter-user generalisation. 

The current practice of not distinguishing between users accomplishes neither intra- nor inter-user generalisation, for the expectation in $L_\D$ runs over $\H$ but the loss $\ell$ itself is agnostic to users. This is akin to postulating a single user whose preferences coincide with the average preference in \H\ (\textit{cf.}~\ref{eq:human_pref}). 

\paragraph{Theoretical analysis.} As discussed, our goal is to minimise $L_\D$, but we only have access to $L_{S}$. We now analyse how well the latter tracks the former. Specifically, we will show how the difference $|L_\D(c) - L_{S}(c)|$ behaves with respect to the defining characteristics of the learning problem.

This is a non-standard learning theory analysis because the data in $S$ is not i.i.d. First, we need to guarantee that the loss is bounded, so we replace the $\log(\cdot)$ appearing in~\eqref{eq:loss} with a counterpart defined as $\hat{\log}(\cdot) \defi \max(\log(\cdot), \ell_{\min})/\ell_{\min}$, where $\ell_{min}$ is large enough in magnitude to render the capping of the logarithm inconsequential; this approach is related to the truncation sometimes used in the learning theory of classification~\citep{avital2020non}.
With this change~\eqref{eq:loss} is guaranteed to lie in $[0,1]$. We then define $\Var(\ell_{c}) \defi \Var(\ell(c(H, X, Y, Y'), Z))$, with $(H, X, Y, Y', Z) \sim \D$, where $\Var(\cdot)$ stands for variance.
\begin{proposition}
\label{thm:bound_c}
For any $c \in \C$, $m > 0$, $n > 0$, and $\delta \in (0, 1]$, we have with probability at least $1 - \delta$,
\begin{equation}
\label{eq:bound_c}
\left| L_\D(c) - L_{S}(c) \right| \le \frac{1}{3m}\left[\const + \sqrt{\const^2 + 18 \const m \left(\dfrac{1}{n}\expvar{c} + \varexp{c} \right)}\right],
\end{equation}
where $\const \defi \ln(2/\delta)$.
\end{proposition}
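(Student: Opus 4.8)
The plan is to collapse the non-i.i.d.\ clustered structure of $S$ into a sum of genuinely i.i.d.\ terms by grouping the data per rater, and then apply a variance-sensitive concentration inequality. Concretely, for each rater $i \in \{1, \dots, m\}$ I would define the per-rater average loss
\begin{equation}
\bar{\ell}_i \defi \frac{1}{n}\sum_{j=1}^n \ell\big(c(h_i, x_{ij}, y_{ij}, y'_{ij}), z_{ij}\big),
\end{equation}
so that $L_{S}(c) = \frac{1}{m}\sum_{i=1}^m \bar{\ell}_i$. The crucial observation is that, although the $n$ examples sharing a rater are dependent (they all condition on the same latent $H = h_i$), the variables $\bar{\ell}_1, \dots, \bar{\ell}_m$ are i.i.d., since the raters are drawn independently from $\D_\H$ and, given each rater, its $n$ examples are drawn independently. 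Each $\bar{\ell}_i$ is an average of terms in $[0,1]$ (the loss is bounded after the truncation of the logarithm described above~\eqref{eq:loss}), hence $\bar{\ell}_i \in [0,1]$, and by the tower rule $\E[\bar{\ell}_i] = \E_\D[\ell(c(H,X,Y,Y'),Z)] = L_\D(c)$. This reduces the problem to controlling the deviation of an empirical mean of $m$ i.i.d.\ bounded variables from its expectation.

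The second step is to compute $\Var(\bar{\ell}_i)$ via the law of total variance, conditioning on the rater $H$. Given $H$, the $n$ summands are i.i.d., so $\Var(\bar{\ell}_i \mid H) = \tfrac{1}{n}\Var(\ell_c \mid H)$ and therefore $\E[\Var(\bar{\ell}_i \mid H)] = \tfrac{1}{n}\expvar{c}$; likewise $\E[\bar{\ell}_i \mid H] = \E[\ell_c \mid H]$, so $\Var(\E[\bar{\ell}_i \mid H]) = \varexp{c}$. Adding the two pieces gives
\begin{equation}
\sigma^2 \defi \Var(\bar{\ell}_i) = \frac{1}{n}\expvar{c} + \varexp{c},
\end{equation}
which is precisely the variance term appearing inside the square root of~\eqref{eq:bound_c}. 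This is the technical heart of the argument: it is what exposes both the $1/n$ averaging of the within-rater variance and the irreducible between-rater variance, so one must use a bound sensitive to $\sigma^2$ (Bernstein) rather than a crude Hoeffding bound, which would discard the $1/n$ factor entirely.

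The final step applies Bernstein's inequality to the i.i.d., $[0,1]$-valued variables $\bar{\ell}_i$ (whose centred versions satisfy $|\bar{\ell}_i - \E\bar{\ell}_i| \le 1$), yielding, for any $t > 0$,
\begin{equation}
\P\!\left(\left|L_{S}(c) - L_\D(c)\right| > t\right) \le 2\exp\!\left(-\frac{m t^2}{2\sigma^2 + \tfrac{2}{3}t}\right).
\end{equation}
Setting the right-hand side equal to $\delta$ and writing $\const = \ln(2/\delta)$ reduces the exponent condition to the quadratic $m t^2 - \tfrac{2}{3}\const\, t - 2\const\,\sigma^2 = 0$ in $t$. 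Factoring $\tfrac{4}{9}$ out of the discriminant $\tfrac{4}{9}\const^2 + 8\const m\sigma^2 = \tfrac{4}{9}(\const^2 + 18\,\const m\sigma^2)$ (using $8 = \tfrac{4}{9}\cdot 18$) gives the positive root $t = \frac{1}{3m}\big[\const + \sqrt{\const^2 + 18\,\const\, m\, \sigma^2}\big]$, and substituting $\sigma^2 = \tfrac{1}{n}\expvar{c} + \varexp{c}$ reproduces~\eqref{eq:bound_c} exactly.

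The main obstacle is conceptual rather than computational: recognising that the hierarchical sampling scheme — i.i.d.\ raters, each supplying a dependent block of $n$ examples — should be handled by passing to per-rater averages, after which the law of total variance supplies the exact variance and everything downstream (the Bernstein step, solving the quadratic, matching constants) is mechanical. A minor point to verify carefully is that the truncation of the logarithm genuinely keeps $\ell \in [0,1]$, so that the centred per-rater averages admit the bound $M=1$ required by Bernstein and the two-sided probability accounts for the factor of $2$ inside $\const = \ln(2/\delta)$.
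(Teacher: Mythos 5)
Your proposal is correct and follows essentially the same route as the paper's proof: the paper likewise collapses the hierarchical sample into i.i.d.\ per-rater averages $\ell_n(c, h, \k^n, \z^n) = \frac{1}{n}\sum_{j} \ell(c, h, k_j, z_j)$, applies the law of total variance conditioning on $H$ to obtain $\Var = \frac{1}{n}\expvar{c} + \varexp{c}$, invokes Bernstein's inequality with range bound $1$, and solves the same quadratic in $\epsilon$ (the paper writes the positive root as $\frac{a + \sqrt{a^2 + 9mab}}{3m}$ with $b = \frac{2}{n}\expvar{c} + 2\varexp{c}$, which is identical to your $\frac{1}{3m}\bigl[\const + \sqrt{\const^2 + 18\const m \sigma^2}\bigr]$). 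Your constant-matching and the verification that the truncated loss lies in $[0,1]$ are both accurate, so there is nothing to correct.
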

The proofs of our theoretical results are in Appendix \ref{seca:proofs}. We will refer to the bound on the right-hand side of \eqref{eq:bound_c} as $\epsilon$. In line with similar results in the literature, $\epsilon$ is $O(\ln(1/\delta))$ as $\delta \ra 0^+$~\citep{shalev2014understanding}. Perhaps more surprising is the fact that, as the number of examples per rater $n$ tends to $\infty$, $\epsilon$ approaches a constant. On closer inspection this makes intuitive sense: with a fixed number of raters $m$, there is an irreducible error that cannot be fully eliminated even if we collect an infinite number of examples $n$ per rater. In contrast, $\epsilon \ra 0$ as $m \ra \infty$, regardless of $n$, at a rate of $O(1/\sqrt{m})$. This is consistent with the literature, and also makes sense, as we can sample the same rater $h$ over and over (and thus even with a single $n=1$ example per rater we will eventually have a perfect estimation).

Another interesting observation for the discussion herein is the dependency of $\epsilon$ on the terms $\expvar{c}$ and \varexp{c}. Intuitively, \expvar{c} measures how much the loss varies per user, on average, while \varexp{c} measures how much the average loss varies across users. If we keep all other variables fixed in \eqref{eq:bound_c}, we have that $\epsilon \propto \sqrt{\expvar{c} + \varexp{c}}$. Clearly, we want to make both terms as small as possible. This sheds light on the discussion above: while increasing the number of examples per rater $n$ weakens the effect of \expvar{c} on the estimate $L_S(c)$, it does not change the effect of $\varexp{c}$, which can only be mitigated with a larger number of raters~$m$.

The discussion above invites the question of how to best allocate a budget of $k = n m$ training examples. The right-hand side of our bound~\eqref{eq:bound_c} is minimised when $n=1$ and $m=k$, which suggests we should sample as many raters from \H\ as training examples (with replacement). While it may not always be feasible to have many raters providing only a few examples each, understanding the trade-offs elicited by our theoretical results may guide the allocation of resources in practice.

The bound in \eqref{eq:bound_c} shows how the empirical loss $L_{S}$ deviates from the true generalisation loss $L_\D$ for a \emph{single} classifier $c \in \C$. Since we are usually interested in finding a classifier that minimises $L_\D$, ideally we would be able to bound $\left| L_\D(c) - L_{S}(c) \right|$ over the entire set $\C$. This is what we set out to do next. Let $c^* \defi \argmin{c \in \C}{L_\D(c)}$ and let $\tilde{c}^* \defi \argmin{c \in \C}{ L_{S}(c)}$. That is, $c^*$ is the true minimiser of $L_\D$ in $\C$ and $\tilde{c}^*$ is the classifier we obtain by minimising the empirical loss $L_{S}$.  We will bound the difference $L_\D(\tilde{c}^*) - L_\D(c^*)$.

First define $\ell_{\C} \defi \ell_{\arg \sup_{c \in \C}\Var(\ell_{c})}$. We will also need the concept of a covering number for $\C$. Given $\alpha \in [0, 1]$, let $\Calpha \subseteq \C$ be the smallest set of classifiers such that
\begin{equation*}
\forall_{c \in \C} \exists_{c' \in \Calpha} \forall_{h \in \H, x \in \X, y \in \Y, y' \in \Y', z \in \Z} \; |\ell(c(h, x, y, y'), z) - \ell(c'(h, x, y, y'), z)| \le \alpha.
\end{equation*}
In words, \Calpha\ is the smallest set that \emph{covers} $\C$ with respect to $\ell$: for every $c \in \C$, there is at least one $c' \in \Calpha$ whose loss is within a distance of $\alpha$ from that of $c$. We use $|\Calpha|$ as a measure of the complexity or ``capacity'' of $\C$. It follows from the definition that there exists $\alpha \in [0,1]$ such that $|\C_\alpha| < \infty$ (for example, $|\C_{\alpha=1}|=1$).

\begin{proposition}
\label{thm:bound}
For any $m, n > 0$ and any $\delta \in (0, 1]$, we have with probability at least $1 - \delta$ that
\begin{equation}
\label{eq:bound}
L_\D(\tilde{c}^*) \le L_\D(c^*) + 2 \inf_{\alpha} \left[\frac{1}{3m}\left(\const_\alpha + \sqrt{\const_\alpha^2 + 18 m \const_\alpha \left(\dfrac{1}{n}\expvar{\C} + \varexp{\C} \right)}\right) + 2 \alpha\right],
\end{equation}
where $\const_{\alpha} \defi \ln\left(2 |\C_\alpha| / \delta\right)$.
\end{proposition}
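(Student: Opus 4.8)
The plan is to reduce the two-sided statement about the empirical risk minimiser $\tilde{c}^*$ to a uniform control of the deviation $|L_\D(c) - L_{S}(c)|$ over the whole class $\C$, and then to make that control quantitative using the covering set $\Calpha$ together with Proposition~\ref{thm:bound_c}. First I would record the standard empirical-risk-minimisation decomposition. Writing
\[
L_\D(\tilde{c}^*) - L_\D(c^*) = \big(L_\D(\tilde{c}^*) - L_{S}(\tilde{c}^*)\big) + \big(L_{S}(\tilde{c}^*) - L_{S}(c^*)\big) + \big(L_{S}(c^*) - L_\D(c^*)\big),
\]
the middle term is non-positive because $\tilde{c}^*$ minimises $L_{S}$, so the left-hand side is at most $2\sup_{c \in \C} |L_\D(c) - L_{S}(c)|$. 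Everything then hinges on bounding this supremum.

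Since $\C$ is potentially infinite, I cannot union-bound Proposition~\ref{thm:bound_c} over it directly, so I would pass to the finite cover $\Calpha$. By the defining property of $\Calpha$, every $c \in \C$ has some $c' \in \Calpha$ with $|\ell(c(\cdot),z) - \ell(c'(\cdot),z)| \le \alpha$ pointwise; since both $L_\D$ and $L_{S}$ are averages of the loss (an expectation and an empirical mean, respectively), this transfers to $|L_\D(c) - L_\D(c')| \le \alpha$ and $|L_{S}(c) - L_{S}(c')| \le \alpha$, hence
\[
\sup_{c \in \C} |L_\D(c) - L_{S}(c)| \le \sup_{c' \in \Calpha} |L_\D(c') - L_{S}(c')| + 2\alpha.
\]
For the finite set $\Calpha$ I would apply Proposition~\ref{thm:bound_c} to each $c'$ with confidence parameter $\delta/|\Calpha|$ and take a union bound, so that with probability at least $1-\delta$ the single-classifier bound holds simultaneously for all $c' \in \Calpha$ with $\const$ replaced by $\const_\alpha = \ln(2|\Calpha|/\delta)$. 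The final reduction is to remove the dependence of each per-classifier bound on $\expvar{c'}$ and $\varexp{c'}$: the right-hand side of \eqref{eq:bound_c} is increasing in the combination $\tfrac{1}{n}\expvar{c'} + \varexp{c'}$, so I replace these by their worst case over $\C$, which is what the definition of $\ell_\C$ is meant to encode, yielding $\tfrac{1}{n}\expvar{\C} + \varexp{\C}$ inside the square root.

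Combining the three displays gives, for each fixed $\alpha$, a bound $L_\D(\tilde{c}^*) \le L_\D(c^*) + 2\big[\tfrac{1}{3m}(\const_\alpha + \sqrt{\const_\alpha^2 + 18 m \const_\alpha(\tfrac1n \expvar{\C}+\varexp{\C})}) + 2\alpha\big]$, matching the claim before the infimum. Because this right-hand side is deterministic — it depends only on $m$, $n$, $\delta$, $|\Calpha|$ and the worst-case variances, none of which involve the sampled data — the minimising $\alpha$ can be fixed a priori, so I may run the argument with that choice and legitimately write $\inf_\alpha$ without paying a further union bound over $\alpha$; the existence of some $\alpha$ with $|\Calpha| < \infty$ guarantees the infimum is over a non-empty set of finite bounds. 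I expect the delicate step to be the treatment of the variance terms: the monotone-replacement argument is what lets the covering bound interface cleanly with Proposition~\ref{thm:bound_c}, but one has to be careful that $\ell_\C$, defined through the supremum of the \emph{total} variance $\Var(\ell_c)$, genuinely dominates the $n$-dependent combination $\tfrac1n\expvar{c} + \varexp{c}$ for every $c \in \C$, rather than merely dominating $\Var(\ell_c)$ itself; making this domination airtight is where I would concentrate the effort. The covering and union-bound steps, by contrast, are routine once Proposition~\ref{thm:bound_c} is in hand, since the non-i.i.d.\ structure of $S$ has already been absorbed into that single-classifier bound.
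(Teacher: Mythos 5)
Your proposal is correct and takes essentially the same route as the paper's proof: cover $\C$ with $\Calpha$, apply the Bernstein-based Proposition~\ref{thm:bound_c} uniformly over the finite net via a union bound (giving $\const_\alpha = \ln(2|\C_\alpha|/\delta)$), transfer to all of $\C$ through the triangle inequality at cost $2\alpha$, and finish with the standard ERM chain producing the factor $2$ and the $\inf_\alpha$. The variance subtlety you rightly flag is resolved in the paper's proof by defining the worst case directly as $\sup_{c}\Var(L^{c,n}) = \sup_c\left[\tfrac{1}{n}\expvar{c} + \varexp{c}\right]$, so the $n$-weighted combination is dominated by construction rather than through the main text's total-variance $\arg\sup$ definition of $\ell_\C$.
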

Expression \eqref{eq:bound} is similar to \eqref{eq:bound_c}, the main difference being its dependency on $|\C_\alpha|$, our measure of the complexity of $\C$. Fortunately the bound grows with $\ln(|\C_\alpha|)$, which means that we can afford to substantially increase the ``size'' of $\C$ without incurring too high a cost in terms of the number $m$ of raters and the number $n$ of training examples per rater needed. The precise value of $|\C_\alpha|$ will depend on $\alpha$ and the set $\C$. We can show that $|\C_\alpha| \le (1/\alpha)^{|\H||\X||\Y|^2}$ (see Proposition~\ref{thm:bound_c_alpha} in Appendix~\ref{seca:additional_results}). It is possible to derive bounds tighter than~\eqref{eq:bound} if one is willing to make assumptions about $\C$, use measures of complexity other than $|\C_\alpha|$, or both. See Proposition~\ref{thm:bound_rademacher} in Appendix~\ref{seca:additional_results} for an example in which combining logistic regression models and the notion of Rademacher complexity leads to improved variance dependence.

The best possible generalisation error $L_\D(c^*)$ will depend on two factors: how well the classifiers in \C\ can model different users and how heterogenous the population \H\ is in terms of preferences. Consider a set \Cbase\ whose classifiers cannot distinguish between users. We will construct a more general set \C\ of classifiers that can differentiate between users by imposing that, for any $c \in \C$, $c(h, \cdot) \in \Cbase$ (in words, this means that any classifier in \C\ reduces to a classifier in \Cbase\ if we fix the user $h$). We will also make sure that all classifiers in \Cbase\ also belong to \C, so we have $\Cbase \subseteq \C$. It follows from the latter assumption that, by definition, $|\C_{\alpha}| \ge |(\Cbase)_{\alpha}|$ and $\Var(\ell_{\C}) \ge \Var(\ell_{\Cbase})$. 

Let us now analyse two scenarios. In the extreme scenario where all users in \H\ have the same preference, it is clear that the best classifier in \C\ also belongs to $\Cbase$, so considering the former instead of the latter will not decrease $L_\D(c^*)$. Using the fact that in this scenario $\varexp{\Cbase}=0$, it is easy to show that replacing $\Cbase$ with \C\ cannot decrease, and will in general increase, the right-hand side of \eqref{eq:bound}. That is, we may need more training examples to achieve the same generalisation error. This formalises the intuition that, when users agree with each other, having a smaller classifier set $\Cbase$ whose classifiers are agnostic to users is often advantageous. 

Next we turn our attention to the more realistic scenario where users do disagree with each other. In this case considering \C\  instead of $\Cbase$ can yield a lower optimal population loss $L_\D(c^*)$, as illustrated by the game example following Equation~\eqref{eq:human_pref}. Interestingly, because in this scenario $\varexp{\Cbase}$ is not necessarily zero, and in fact can be quite large, it may be the case that $\varexp{\C}  < \varexp{\Cbase}$ (which implies that $\expvar{\C}  > \expvar{\Cbase}$, as $\Var(\ell_{\C}) \ge \Var(\ell_{\Cbase})$). This means that, for large enough $n$, replacing \Cbase\ with \C\ can in fact \emph{decrease} the effect of the loss variance on the bound~\eqref{eq:bound}. The overall bound will not necessarily get smaller, though, as $|\C_{\alpha}|$ will show up instead of $|(\Cbase)_{\alpha}|$. Regardless of its effect on the bound (or, more generally, on the resulting sample complexity), adopting a set \C\ of classifiers that distinguish between users may be highly beneficial when users disagree with each other, as the decrease in the optimal generalisation error $L_\D(c^*)$ can be substantial.

\vspace{-2.5mm}
\section{Reward-feature models}
\label{sec:rfm}
\vspace{-3mm}

The theoretical results presented in the previous section apply to any classifier set $\C$; we now discuss specific ways of defining this set and argue for one architecture in particular. 

As discussed in Section~\ref{sec:background}, the classifiers $c_{\vtheta}$ function like a ``wrapper'' around the reward functions $r_{\vtheta}$, which are the objects we are really interested in. We advocate the use of reward models with two disjoint sets of parameters: a common vector of parameters $\vtheta \in \R^{e}$ that is shared among all users $h \in \H$, including those $h \notin \hH$, and parameters $\vtheta_h \in \R^{d}$ that are specific to user $h$ (this is closely related to the concept of ``adapters''~\cp{rebuffi2018efficient,houlsby2019parameter,poth2023adapters}). We will call $r_{\vtheta, \vtheta_h}$ \emph{adaptive reward models}.

The partitioning of the parameters of adaptive reward models induces a division of their training procedure itself. The data in $S$ can be used to learn $\vtheta$ and $\vtheta_{\hh}$ for a rater $\hh \in \hH$. However, to learn the parameters $\vtheta_h$ associated with a user not in $S$, $h \in \H \setminus \hH$, one needs additional data containing feedback provided by $h$, $S_h \defi \{ (x_i, y_i, y'_i, z_i)\}_{i=1}^{\n}$, with $z_i  \sim \D^{x_i, y_i, y'_i, h}_{\Z}$. We will refer to the use of $S$ to learn $\vtheta$ and $\vtheta_{\hh}$ as \emph{training}; learning $\vtheta_h$ using $S_h$ will be referred to as \emph{adaptation}.

We assume we have abundant data for training in $S$, but adaptation must take place with as few additional training examples as possible in $S_h$ (that is, $n \gg \n$). Because of this, and also because we want the adaptation of the model to a specific user $h  \in \H \setminus \hH$ to be quick, we generally want to have $e \gg d$---in words, we want the number of shared parameters to be much larger than the number of parameters that are specific to a given individual. Another, slightly more practical reason to have a large number of shared parameters is that this allows for a distributed learning architecture in which the learning of $\vtheta$ can make use of a centralised, powerful computational infrastructure, while each $\vtheta_h$ can be learned ``locally'' using less resources.

\paragraph{Reward features.} We now discuss an architecture for $r_{\vtheta, \vtheta_h}$ that has particularly nice properties.
We define \emph{reward features} as a function $\vphi(x, y): \X \times \Y \ra \R^d$. We then assume that the reward for individual $h$ is given by
\begin{equation}
\label{eq:rh}
r_h(x, y) = \dot{\vphi(x, y)}{\w_h}, 
\end{equation}
where $\w_h \in \R^d$ and $\dot{\cdot}{\cdot}$ denotes inner product. Following the steps taken in Section~\ref{sec:background}, we define a parametric form for $r_h$ that reflects \eqref{eq:rh}. We first define a parameterised function $\vphi_{\vtheta}: \X \times \Y \ra \R^d$, with $\vtheta \in \R^e$. Note that $\vtheta$ is the set of shared parameters defined above; the parameters $\vtheta_h$ associated with a specific $h \in \H$ are simply a vector $\w_h \in \R^d$. Putting it all together, our parameterised model is $r_{\vtheta, \w_h}(x, y) = \dot{\vphi_{\vtheta}(x, y)}{\w_h}$. We call this architecture a \emph{reward-feature model} (RFM).

\paragraph{Training.} We now derive an appropriate optimisation objective for RFM's training. Let $\W \in \R^{|\hH| \times d}$ be a matrix formed by stacking $|\hH|$ vectors $\w_{\hh}$. Let $c_{\vtheta, \W}(\hh, x, y, y') \defi \dot{\vphi_{\vtheta}(x, y) - \vphi_{\vtheta}(x, y')}{\w_{\hh}}$, where $\w_{\hh}$ is the row of \W\ corresponding to $\hh$. Plugging $c_{\vtheta, \W}$ into \eqref{eq:loss}, we get
\begin{equation}
\label{eq:rfm_training}
\ell(c_{\vtheta, \W}(\hh, x, y, y'), z) \defi - z \log \dot{\vphi_{\vtheta}(x, y) - \vphi_{\vtheta}(x, y')}{\w_{\hh}} + (z - 1) \log \dot{\vphi_{\vtheta}(x, y') - \vphi_{\vtheta}(x, y)}{\w_{\hh}}.
\end{equation}
As before, if we minimise the empirical loss $L_{S}$ induced by~\eqref{eq:rfm_training}, we will automatically be maximising the likelihood $\L(\vtheta, \W | S)$, which is a proxy for $\L(\vtheta, \W)$.

RFM's training yields a feature function $\vphi_{\vtheta}: \X \times \Y \ra \R^d$ and a matrix $\W \in \R^{|\hH| \times d}$. Each dimension of $\vphi_{\vtheta}$, $(\vphi_{\vtheta})_i$, can be interpreted as a criterion used by humans to express their preferences. The $i$-th element of $\w_{\hh}$, $w_{\hh i}$, represents how much rater $\hh$ values (or does not value) criterion $(\vphi_{\vtheta})_i$. Importantly, learning $\vphi_{\vtheta}$ does not depend on raters being able to articulate the criteria underlying their preferences; the only thing that is required from them is to rank pairs of candidate responses.

The question arises as to how well the features $\vphi_{\vtheta}$ will be able to capture the preferences of the entire population \H. We need $\vphi_{\vtheta}$ to have enough representational capacity---a proxy for which is the number of parameters $e$---and its dimension $d$ to be sufficient to capture the preferences in $S$. However, when $e$ and $d$ are too large, an RFM may effectively become $|\hH|$ independent reward models. We usually do not want that. Instead, we want $\vphi_{\vtheta}$ to capture features that are \emph{common} to the raters in $\hH$, since these should also reflect the preferences of the users in \H\ more generally. This means that $e$ and $d$ have to be appropriately set or controlled through regularisation. 

\paragraph{Adaptation.} We are interested in using features $\vphi_{\vtheta}$ learned during training to specialise $r_{\vtheta, \w_h}$ to users beyond the raters $\hh \in \hH$.  The problem of adapting $r_{\vtheta, \w_h}$ to an unseen user can be formulated as a small modification of training in which the loss \eqref{eq:rfm_training} is minimised with respect to a new set of coefficients \w\ while the features $\vphi_{\vtheta}$ are held fixed. Formally, we define a classifier set parameterised by \w, $c_{\w}(h, x, y, y') \defi \dot{\vphi_{\vtheta}(x, y) - \vphi_{\vtheta}(x, y')}{\w}$, plug it into \eqref{eq:rfm_training}, and minimise $L_{S_h}$ with respect to \w. Adaptation is particularly simple with an RFM: since $\vtheta$ is frozen, optimising \w\ comes down to logistic regression, which is a well understood convex optimization problem whose sample complexity has been characterised for different scenarios~(see, for example, \cp{long95sample,plan2016highdimensional,hsu2024sample} and references therein).

\paragraph{Connection with the theory.} We now discuss how to connect RFM's two-stage learning process with the theory developed in Section~\ref{sec:reward_model_personalisation}. First note that the upper bound in~\eqref{eq:bound} directly applies to RFM's intra-user generalisation loss, for in this case adaptation is not necessary. To apply a version of the bound~\eqref{eq:bound} to training \emph{and} adaptation, we must see RFM as a function of \vtheta\ only. We can accomplish this by folding adaptation into the functioning of the model. When faced with a new tuple $(h, x, y, y')$, we create a dataset $S_h$ on-the-fly by sampling $\n$ examples from $\D^h$, and then make $\w_h = \argmin{\w}{L_{S_h}(c_{\w})}$. That is, $\w_h$ is no longer a free parameter of the model. If we use the model in this way during training, we can use~\eqref{eq:bound} to bound its post-adaption generalisation error.

If we are concerned with adaptation only,~\eqref{eq:bound} directly applies as a special case when $|\H|=1$, which greatly simplifies the bound but obscures some of the underlying insights. Because RFM's adaptation is a convex optimisation problem, there are known generalisation bounds that scale nicely, avoiding polynomial dependence on factors such as $|\mathcal{X}|$ and $|\mathcal{Y}|$ (see for example \citep[][Theorem~6.1]{avital2020non}). 

\paragraph{Why not other adaptive reward model architectures?} As discussed, RFMs are but one instantiation of a more general class of adaptive reward models $r_{\vtheta, \vtheta_h}$. 
We now lay out a few arguments supporting our choice of the RFM architecture in particular.

First, as discussed, the fact that RFM's architecture is linear in $\w_h$ gives rise to a convex adaptation problem. This should not be overlooked: having a model able to reliably adapt to new users in a low-data regime may be crucial~\cp{shalev2014understanding}. Second, and relatedly, RFM's simple architecture also makes it easier to improve the entire training pipeline, as one can resort to well established linear algebra  techniques~\cp{shenfeld2025language}. Third, the fact that the model's output is a linear combination of the features $\vphi_{\vtheta}$ should make it easier to interpret their contributions (note that $\vphi_{\vtheta}$ itself can be an arbitrarily complex non-linear functions of the inputs)~\cp{molnar2025interpretable}. Fourth, it should be easy to add new, possibly handcrafted, features that represent agreed upon metrics like safety, helpfulness, and factuality~\cp{bai2022training,wang2024interpretable,dorka2024quantile}. 

But perhaps the main reason to adopt RFMs is the fact that they allow for an easy adaptation of the downstream LLM (or, more generally, policy). Recall that ultimately we are interested in using $r_{\vtheta, \w_h}$ to steer the behaviour of a policy. There are methods in the literature specifically designed to synthesise a policy that performs well under a linear combination of features whose coefficients are only provided at deployment time~\cp{barreto2017successor,barreto2018transfer,barreto2020fast,haarnoja2018composable,hunt2019composing,borsa2019universal,jang2023personalized,rame2023rewarded,ruizhe2025decoding,chen2025pad,carleton2025mavis}. In our case, this means that, after adapting to a user using a few examples, we can hand the resulting $\w_h$ over to one of these methods and immediately obtain a policy (or LLM) that is specialised to the corresponding reward.

\vspace{-2mm}
\paragraph{Limitations.} There is an inherent tension between training and adaptation: in general, the simpler the latter, the more demanding the former becomes. RFM sits on one extreme of this spectrum, with an adaptation as simple as it can be at the expense of a potentially complex training. While this aligns with current LLM practices---with ample data and compute for offline training and scarcer resources for adaptation---, it might not be the ideal trade-off in other contexts, or even for LLMs if resource availability shifts in the future.

\vspace{-2.5mm}
\section{Experiments}
\label{sec:experiments}
\vspace{-2mm}
We now present experiments illustrating our findings. Due to space constraints we only describe the crucial aspects of our experimental setup; for further details and additional results, see Appendix~\ref{seca:experiment_details}. 

We adopted UltraFeedback, a dataset carefully curated to ensure the quality and diversity of the responses~\cp{cui2023ultrafeedback}. This is a relatively large dataset for this type of study: the version we adopted has a training set with $60,829$ examples and a test set with $985$ examples.\footnote{\href{https://huggingface.co/datasets/allenai/ultrafeedback\_binarized\_cleaned}{huggingface.co/datasets/allenai/ultrafeedback\_binarized\_cleaned}} On the solution side, we used~\ctp{gdm2024gemma} Gemma 1.1 2B model to implement both a baseline and RFM.\footnote{\href{https://huggingface.co/google/gemma-1.1-2b-it}{huggingface.co/google/gemma-1.1-2b-it}} The baseline is a reward model that neither distinguishes between raters nor performs adaptation, as is common practice. We implemented RFM using the same Gemma model, with the final layer replaced by $d$ counterparts corresponding to the features $\vphi_{\vtheta}$. Both models were trained using gradient descent to minimise their corresponding losses, starting from the pre-trained parameters of Gemma. 

The goal of our experiments is twofold: to illustrate the theoretical results in Section~\ref{sec:reward_model_personalisation} and to assess RFM's performance as compared to existing baselines. To study both rigorously, we must be able to vary the number of raters as well as their heterogeneity in terms of preferences. We accomplished this by defining a systematic way of generating synthetic raters with different characteristics. We created $13$ features $\{\phi_i(x, y)\}_{i=1}^{13}$ capturing different aspects of a context-response pair: from superficial, easy-to-compute features, like length or the number of adverbs in $y$, to more semantic features such as the number of words in $y$ that are synonyms (or antonyms) of words in $x$ (see Appendix~\ref{seca:empirical_analysis} for details). In contrast with features generated by LLMs, even our most nuanced features can be unequivocally and efficiently computed. They are also devoid of inherent valence.

We defined a user as a vector $\vomega \in \{-1,1\}^{13}$, that is, a user either likes ($+1$) or dislikes ($-1$) each one of the features $\phi_i$. This gives rise to a space \H\ with $2^{13}$ users. Given an example $(x, y, y')$ and a user $\vomega$, we determine the corresponding preference as $z = \ind\{ \dot{\vphi(x, y)}{\vomega} > \dot{\vphi(x, y')}{\vomega}\}$, where $\ind\{\cdot\}$ is the indicator function. We defined a distribution $\D_{\H}$ parameterised by a single scalar $p \in [0,1]$ indicating the probability that each $\phi_i$ is liked independently of the others. That is, we sample users $h \sim \D_{\H}$ by drawing each $\omega_{hi}$ from a Bernoulli with mean $p$. For each experiment, we sampled $m$ raters to rank the training set and $500$ held-out users to assess the inter-user generalisation of the models on the test set (this number is considerably larger than in previous studies~\cite{poddar2024personalizing,chen2024modeling,shenfeld2025language}).

It is worth emphasising that throughout our experiments RFM did \emph{not} have access to the real features $\vphi$ underlying the data. Instead, it \emph{learned} features $\vphi_{\theta}$ parameterised by $\vtheta \in \R^{e}$ using~\eqref{eq:rfm_training}. The notation RFM($d$) indicates that a $d$-dimensional $\vphi_{\theta} \in \R^d$ was learned. 

\paragraph{Empirical analysis.} Figure~\ref{fig:empirical_analysis} shows the performances of the baseline and RFM as we vary the number $m$ of training raters, the parameter $p$ underlying $\D_{\H}$, the number $\n$ of adaptation examples, and the number $d$ of features learned by RFM.  Note how RFM behaves as predicted by our theoretical results, with performance improving with the number of raters $m$. Also as predicted by the theory, the performance of both the baseline and RFM improve with the homogeneity $p$ of the users' preferences ($\D_{\H}$ has maximum entropy when $p=0.5$, so preferences get more homogeneous as $p \ra 1$). However, RFM's performance is much more robust to changes in $p$, with almost no degradation when a sufficient number of features $d \ge 32$ are learned. Also note how RFM can adapt well to a new user using as few as $\n=30$ examples, and for $d \in \{32, 128\}$ performance keeps increasing with $\n$ (albeit only slightly). This means that, by ranking only $30$ examples, a user can replace a generic reward model reflecting the average opinion of the population \H\ with a model specialised to them. 

\begin{figure*}
\begin{center}
\subcaptionbox{$p=0.5$, $\n=30$  \label{fig:experiment_1}}
{\includegraphics[width=0.33\textwidth]{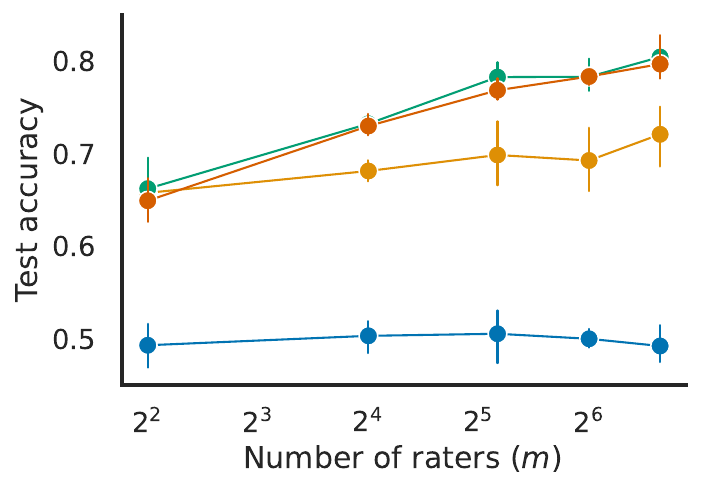}}
\hfill
\subcaptionbox{$m=60$, $\n=30$ \label{fig:experiment_2}}
{\includegraphics[width=0.30\textwidth]{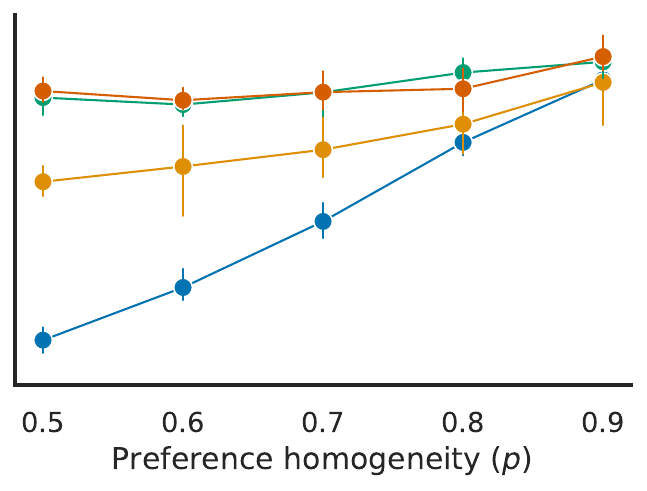}}
\hfill
\subcaptionbox{$p=0.7$, $m=60$ \label{fig:experiment_3}}
{\includegraphics[width=0.30\textwidth]{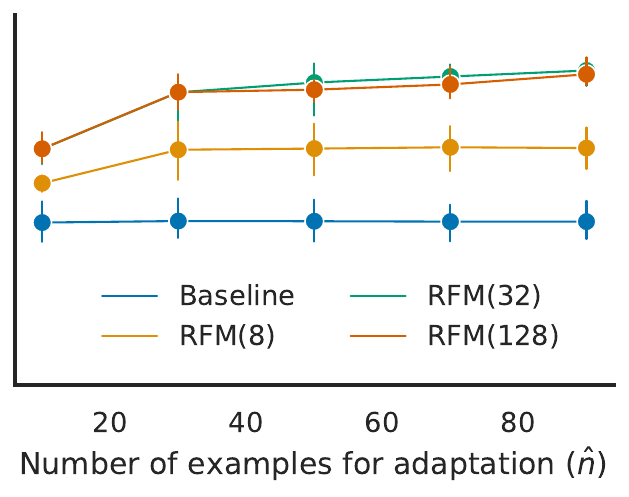}}
\caption{Accuracy in predicting the preferences of $500$ held-out users on the test set after adaptation (estimate of inter-user generalisation). Error bars are $99\%$ confidence intervals over $5$ runs.}\label{fig:empirical_analysis}
\end{center}
\vspace{-5mm}
\end{figure*}

\begin{wrapfigure}{r}{0.36\textwidth}
    \centering
    \includegraphics[width=0.36\textwidth]{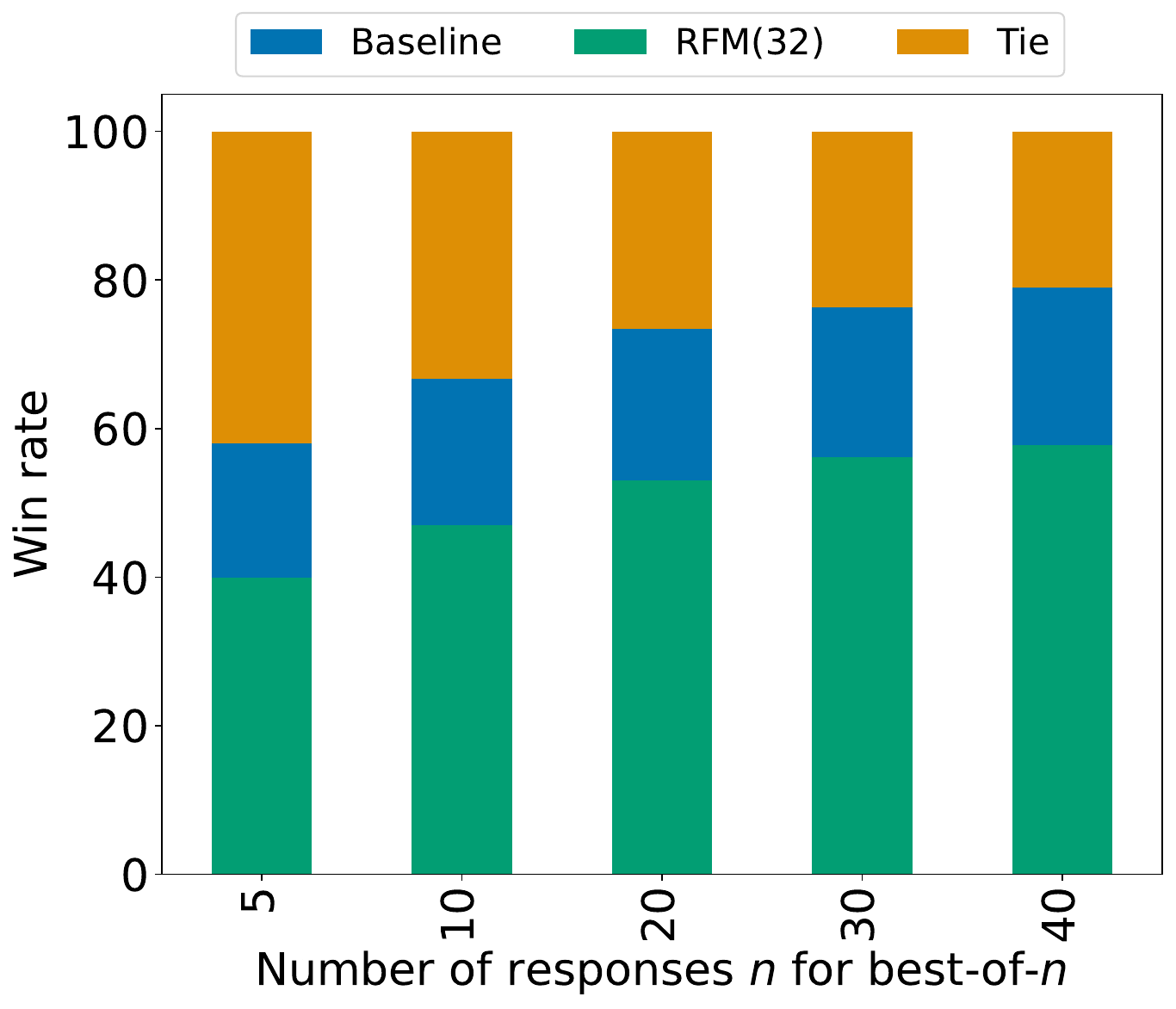}
        \caption{Relative accuracy in predicting the preferences of $500$ held-out users using $\n=30$ examples.}
        \label{fig:bon}
        \vspace{-5mm}
\end{wrapfigure}
\paragraph{Modulating the LLM's output.} Next we assess how well RFM's good performance transfers to the scenario where it is used to steer the behaviour of an LLM. We use \emph{best-of-$n$} over up to $n=40$ responses to each context $x$ in UltraFeedback's test set~\cp{stiennon2020learning}. The responses were generated by~\ctp{gdm2024gemma2} Gemma 2 9B and Gemma 2 27B  ($20$ responses each). For each context $x$ in the test set, we scored all $40$ responses using the baseline and RFM($32$) adapted with $\n=30$ examples. Next, we selected the best-of-$n$ response to context $x$ according to the baseline and RFM, and declared either a winning model or a draw based on the actual score of their selected responses computed by user $\vomega$.  Figure~\ref{fig:bon} shows results when best-of-$n$ is applied with increasing $n$. We highlight that the $40$ candidate responses used come from a distribution that is different from the one used for training. Yet, RFM consistently outperforms the baseline, and the fraction of times its selected response is preferred grows with $n$. 

\paragraph{Comparisons.} We now compare RFM with three types of adaptive counterparts. The first one is a linear model that adapts to a user $h$ by fine-tuning the final layer of the (trained) baseline through gradient descent on the corresponding dataset $S_h$. This is similar to RFM, except that the baseline features being linearly combined were learned without raters $h$ being distinguished (and thus this is a clear way to assess whether doing so is indeed beneficial). The second comparison involves the non-linear architecture proposed by \ct{park2024rlhf}. We partitioned the shared parameters $\vtheta = [\vtheta_1, \vtheta_2]$ and learned $\vtheta_2$ together with the vectors $\w_h$. This was implemented by freezing the backbone Gemma model ($\vtheta_1$) and representing \vphi\ as a multilayer perceptron  (MLP) with $3$ or $5$ hidden layers containing $32$ units each ($\vtheta_2$; details in Appendix~\ref{seca:comparisons}). The third comparison is with models that perform ``in-context'' adaptation. We implemented these using two prominent LLMs, Google DeepMind's~\cp{geminiteam2023gemini} Gemini 1.5 Pro and OpenAI's~\cp{openai2024gpt4ocard} GPT-4o. To assess the LLMs' prediction accuracy for held-out user $h$, we provided $\n=10$ training examples ranked by $h$ together with the test example to be ranked (the precise prompt can be found in Appendix~\ref{seca:comparisons}). For reference, we also show the ``zero-shot'' performance of Gemini, obtained with $\n=0$.

\begin{wrapfigure}{r}{0.43\textwidth}
    \centering
    \vspace{-1mm}
     \includegraphics[width=0.43\textwidth]{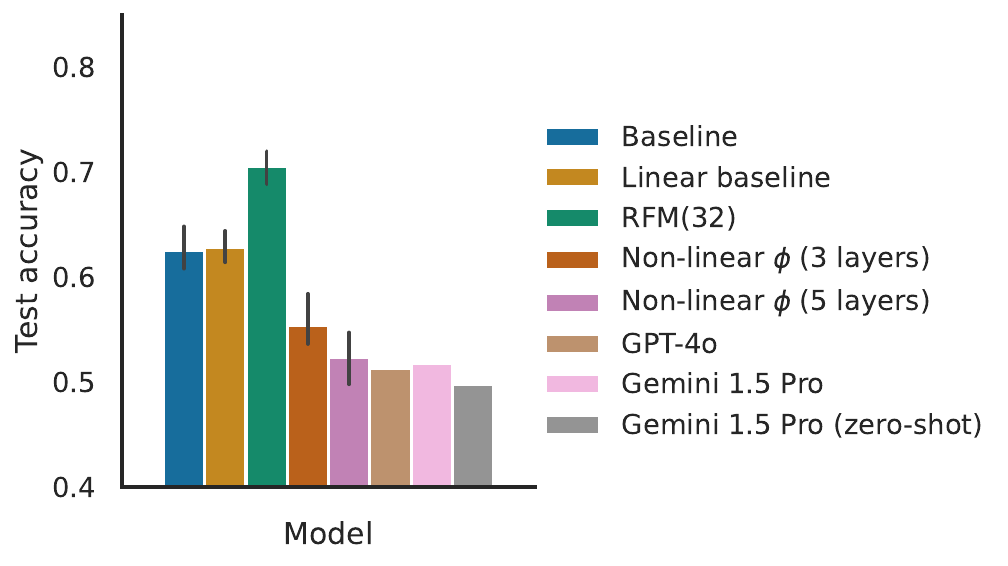}
        \caption{Accuracy in predicting the preferences of $500$ held-out users using $\n=10$ examples for adaptation. Error bars are $99\%$ confidence intervals over $5$ runs.}    
        \label{fig:in_context_comparison}
        \vspace{-3mm}
\end{wrapfigure}
Figure~\ref{fig:in_context_comparison} shows the results of the two LLMs side-by-side with those obtained by the baseline, the linear baseline, the non-linear baselines, and RFM($32$)---the last three also adapted using $\n=10$ examples. The linear baseline performs on par with its non-adaptive counterpart, which suggests that features learned in a user-agnostic way are not capable of capturing the variance of the population \H. The non-linear baselines perform poorly, probably because they have too many parameters to be trained with only $\n=10$ examples. Surprisingly, the LLMs also seem to be unable to capture the users' preferences, with their performance reducing to chance (see further discussion in Appendix~\ref{seca:comparisons}). RFM correctly predicts the users' preferences around $70\%$ of the time.

We also compared RFM with~\ctp{poddar2024personalizing} \emph{variatonal preference learning} (VPL), described in Section~\ref{sec:related_work}. Even though in principle VPL allows for inter-user generalisation, \ca{poddar2024personalizing} have only assessed VPL's intra-user generalisation. They report an estimate of this type of generalisation on the UltraFeedback dataset, using the features provided with it, of $61.49\%$. We tried to reproduce their experimental protocol as closely as possible, as explained in Appendix~\ref{seca:comparisons}. The resulting estimate of RFM($32$)'s intra-user generalisation is $61.61\%$, which is on par with VPL's.

\paragraph{Modelling groups of real users.} In the previous experiments we had access to the features \vphi\ underlying the raters' preferences; although this is useful to investigate the models' performance under different conditions, in a more realistic scenario we would only have the training examples in the dataset $S$. To simulate this situation, in this section we use reward models as the users $h$. 

We performed experiments using two datasets: UltraFeedback, as before, and also \ctp{zollo2025personalllm} PersonalLLM. Each example in the PersonalLLM dataset has been scored by $10$ reward models. To mirror this setup with UltraFeedback, we used $8$ publicly-available reward models to score and rank all its test examples. We provide a detailed account of our experimental setup in Appendix~\ref{seca:rmr}.

All the reward models considered were trained with human preference data, and hence they reflect the opinion of groups of real people. As these models do not distinguish between raters---much like the baseline---, they reflect an average over preferences, and thus tend to ``agree'' considerably with each other. To avoid such overlapping, which may render the distinction of raters unnecessary, we filtered out all the examples in the training and test sets of both datasets in which two or fewer raters disagreed with the majority. We then carried out a ``leave-one-out'' cross-validation composed of $k$ rounds in which $k-1$ of the models played the role of the raters \hH\ and the remaining model played the role of the held-out user ($k=8$ for UltraFeedback and $k=10$ for PersonalLLM). 

Results are shown in Figure~\ref{fig:rmr}. We compare RFM with the non-adaptive and linear baselines used in the previous experiments. RFM's performance either matches or significantly surpasses that of the baselines in most cases, suggesting that it can be useful in real scenarios. As the reward models in this experiment reflect the preferences of aggregated real users, some of the heterogeneity of the population's preferences has probably been smoothed out in the training of these models. We conjecture that if the reward models were replaced by the real users underlying these models, RFM’s advantage over the two baselines would be greater still.

\begin{figure*}[hbt!]
        \centering
        \subcaptionbox{UltraFeedback dataset \label{fig:ultrafeedback}}
        {\includegraphics[width=0.44\textwidth]{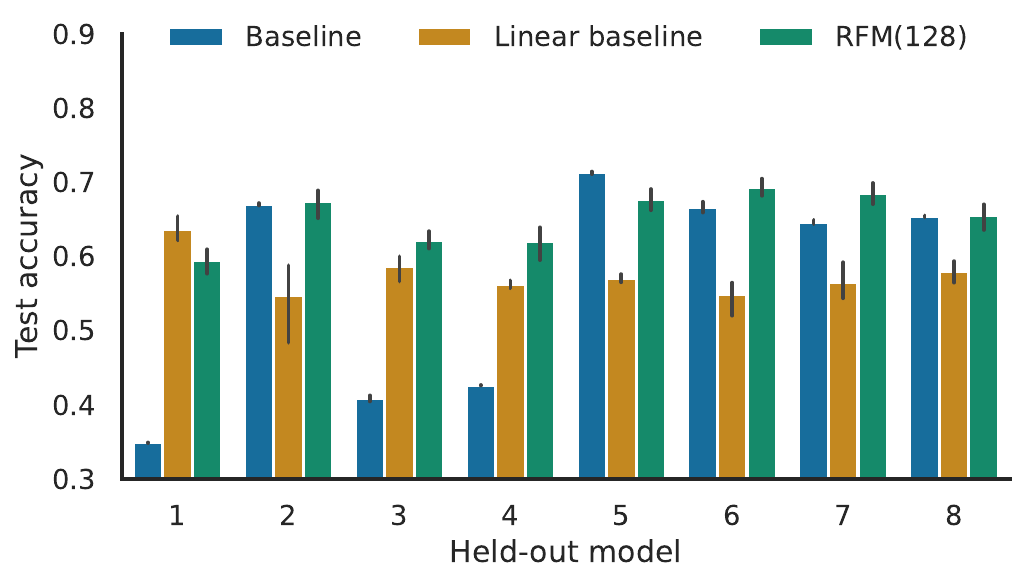}}
        \subcaptionbox{PersonalLLM dataset \label{fig:personal_llm}}
        {\includegraphics[width=0.44\textwidth]{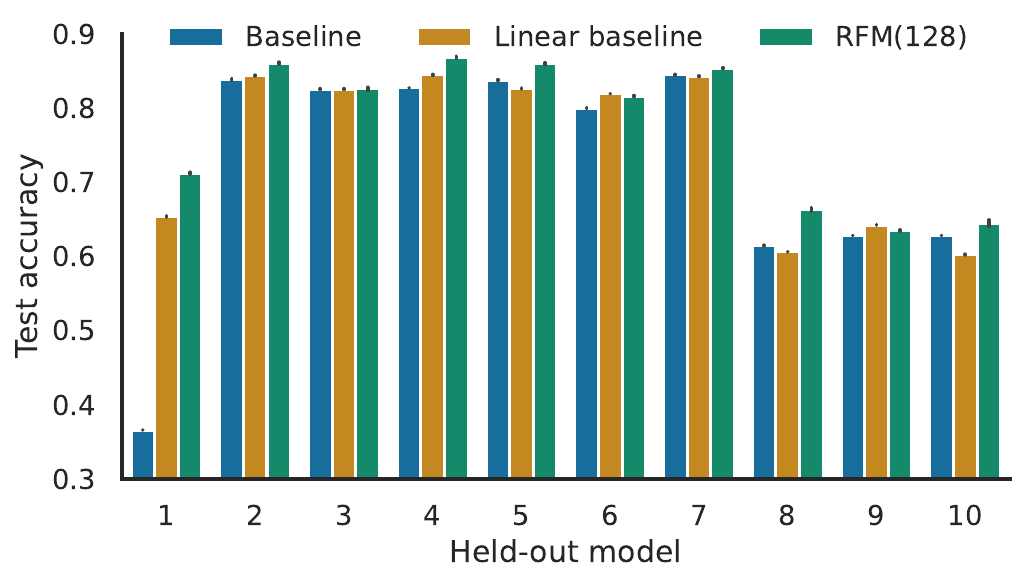}}
        \caption{Accuracy in predicting the preferences of a held-out reward model using $\n=50$ examples for adaptation. Error bars are $99\%$ confidence intervals. \label{fig:rmr}}
\vspace{-5mm}        
\end{figure*}

\section{Related work}
\label{sec:related_work}

We distinguish between two categories of adaptive reward models. In the first we have methods whose goal is to compute a single policy representing some form of compromise across users~\cp{dumoulin2024density,chakraborty2024maxmin,siththaranjan2024distributional,conitzer2024social,dai2024safe,yao2024no}. These methods take individual idiosyncrasies into account primarily to build more accurate models, and hence they do not necessarily provide a mechanism for adapting to new users. 

In the second category we have methods designed to support the downstream specialisation of a policy to a specific user, like RFMs~\cp{poddar2024personalizing,zhao2024group,fleisig2024when,li2024personalized,kim2024few,chen2024modeling,coombs1950psychological,go2024compositional,lee2024test,park2024rlhf,zhong2024provable,shenfeld2025language,bose2025lore}. Among these, \ctp{shenfeld2025language} and \ctp{bose2025lore} works are probably the closest to ours. They both concurrently propose the same architecture as RFM, though with different names and slightly different emphases. While we analyse the \emph{learning problem}---how training and adaptation change as the dataset $S$ changes---, \ca{shenfeld2025language} focus on how to solve the resulting \emph{optimisation problem}: how to go about training and adaptation once we have committed to a specific dataset $S$. \ca{shenfeld2025language} also show how to leverage RFM's simple architecture to improve several aspects of the training pipeline, most notably a stable initialisation scheme via singular value decomposition of the data matrix and a more efficient way of selecting adaptation examples based on active learning. \ca{bose2025lore} provide a particularly clear exposition of the subject and a well-executed empirical evaluation that includes comparisons and datasets not considered here. We see these two concurrent works as highly complementary to our own, as they collectively provide mutually reinforcing theoretical arguments and empirical evidence in favour of the proposed architecture.

\ctp{poddar2024personalizing} \emph{variatonal preference learning} (VPL) encodes examples previously ranked by a user into a latent variable, and then conditions the reward model and the policy on that variable (we compare VPL with RFM in Section~\ref{sec:experiments}). \ct{zhao2024group} also encode pairs of ranked responses, but instead of resorting to variational techniques they use an LLM to do in-context inference of the rewards. Similar methods encode different types of information about users, like demographic data, either in isolation or together with examples of ranked pairs~\cp{fleisig2024when,li2024personalized,kim2024few}. \ct{chen2024modeling} propose to replace the Bradley-Terry model with \ctp{coombs1950psychological} \emph{ideal point model}. The resulting method is similar to ours, albeit slightly more complex. \ctp{go2024compositional} approach is also similar to ours, but they use features computed by LLMs with handcrafted prompts. 

We redirect the reader to Appendix~\ref{seca:related_work} for an in-depth discussion of some of the works cited above.

\section{Conclusion}
\label{sec:conclusion}

We have formalised and analysed the problem of learning a reward model that can adapt to users. To the best of our knowledge, this is the first time this problem is rigorously studied under the assumptions considered. We have derived a PAC bound that elicits the dependency of the approximation error on the number of training examples and raters. Our analysis provides a formal framework for assessing the trade-offs involved in the collection of preference data and the use of an adaptive reward model.

We have also introduced RFM, a reward-model architecture specifically designed for fast adaptation to new users. RFM can be trained using pairwise response comparisons provided by humans. This results in a set of reward features that can be linearly combined to represent a user, even if their preferences are not reflected in the training data. Such an adaptation process can be formulated as a simple logistic regression---a well-understood convex classification problem. We showed how RFM can be personalised to an unknown user using a few dozen pairs of examples ranked by them. We have presented experiments showing how RFM compares favourably with a non-adaptive baseline, which is today's prevailing practice, and also several adaptive counterparts. 

RFM can be readily combined with zero-shot RL methods that construct a policy on-the-fly which performs well under a linear combination of features. In the context of LLMs, this means that by responding to a few questions a user can have an otherwise generic model specialised to their taste. Although we have  focused on the use of RFM in the context of LLMs, it is also applicable to other modalities beyond language, like images, sound, and video. In fact, RFM can be used as a reward function in any scenario that can be formalised as an RLHF problem, which includes not only more general state and action spaces but also multi-step interactions of the policy with the environment. 

\newpage

\section*{Acknowledgements}

We would like to thank Tom Schaul, Benjamin Van Roy, and Dan Andrei Calian for their insightful comments and useful feedback. We are also thankful to Bernardo \'Avila Pires for assistance with one of the experiments and to Canfer Akbulut and Arianna Manzini for contributing to the discussion regarding the potential broader impact of our work. Finally, we would like to thank the anonymous reviewers for their comments and suggestions.

\bibliographystyle{abbrvnat}

\newpage

\appendix

\section{Proofs and additional theoretical results}
\label{seca:proofs}

In this section we prove the theoretical results in the main paper and present two complementary results. To simplify the notation, we define $\K \defi \X \times \Y^2$ and use $k$ to refer to tuples $(x, y, y')$.

\subsection{Proofs of the theoretical results in the main paper}
\label{seca:proofs_paper}

We restate the results presented in the main paper and prove them.

\vspace{3mm}

\setcounter{proposition}{0}
\begin{proposition}
For any $c \in \C$, $m > 0$, $n > 0$, and $\delta \in (0, 1]$, we have with probability at least $1 - \delta$ that
\begin{equation*}
\left| L_\D(c) - L_S(c) \right| \le \frac{1}{3m}\left[\const + \sqrt{\const^2 + 18 \const m \left(\dfrac{1}{n}\expvar{c} + \varexp{c} \right)}\right],
\end{equation*}
where $\const \defi \ln(2/\delta)$.
\end{proposition}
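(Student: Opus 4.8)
The plan is to exploit the block structure of $S$: although the $mn$ individual loss terms are not i.i.d., they organise into $m$ independent groups, one per rater, and within each group the $n$ terms are i.i.d. conditionally on the rater. First I would define, for each rater $i$, the per-rater average loss $\bar{\ell}_i \defi \frac{1}{n}\sum_{j=1}^n \ell(c(h_i, x_{ij}, y_{ij}, y'_{ij}), z_{ij})$, so that $L_S(c) = \frac{1}{m}\sum_{i=1}^m \bar{\ell}_i$. Because the $m$ raters are drawn i.i.d.\ from $\D_\H$ and their examples are then drawn independently, the variables $\bar{\ell}_1, \dots, \bar{\ell}_m$ are i.i.d. Moreover, by the tower rule $\E[\bar{\ell}_i] = \E_H\big[\E[\ell_c \mid H]\big] = L_\D(c)$, so $L_S(c)$ is an empirical mean of $m$ i.i.d.\ copies whose common mean is exactly the target $L_\D(c)$. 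This reduces the non-i.i.d.\ problem to a standard concentration question at the level of raters.

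Next I would compute the variance of a single $\bar{\ell}_i$ via the law of total variance. Conditioning on $H$, the $n$ summands are i.i.d., so $\Var(\bar{\ell}_i \mid H) = \frac{1}{n}\Var(\ell_c \mid H)$, while $\E[\bar{\ell}_i \mid H] = \E[\ell_c \mid H]$. Taking expectations and applying $\Var(\bar{\ell}_i) = \E[\Var(\bar{\ell}_i \mid H)] + \Var(\E[\bar{\ell}_i \mid H])$ yields $\Var(\bar{\ell}_i) = \frac{1}{n}\expvar{c} + \varexp{c}$, precisely the variance term appearing inside the square root of \eqref{eq:bound_c}. This is the step that explains both the $1/n$ attenuation of the within-rater variance and the fact that the across-rater term $\varexp{c}$ is left untouched by $n$.

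With the $\bar{\ell}_i$ i.i.d., bounded in $[0,1]$ (since $\ell \in [0,1]$ after the $\hat{\log}$ capping, any average of its values lies in $[0,1]$ as well), with common mean $L_\D(c)$ and variance $V \defi \frac{1}{n}\expvar{c} + \varexp{c}$, I would invoke the two-sided Bernstein inequality $\P\big(|L_S(c) - L_\D(c)| \ge t\big) \le 2\exp\!\big(-m t^2 / (2V + \tfrac{2}{3}t)\big)$. Setting the right-hand side equal to $\delta$ gives the quadratic $m t^2 - \frac{2\const}{3} t - 2\const V = 0$ with $\const = \ln(2/\delta)$, the factor $2$ inside the logarithm coming from the two-sided tail. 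Solving for the positive root and using the identity $\sqrt{\tfrac{4}{9}\const^2 + 8 m \const V} = \frac{2}{3}\sqrt{\const^2 + 18 m \const V}$ reproduces the right-hand side of \eqref{eq:bound_c} exactly.

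The only genuinely non-routine step is the first one: recognising that the correct unit of analysis is the rater rather than the individual example, which is what converts the dependent data into an i.i.d.\ average amenable to Bernstein's inequality. Everything afterwards---the total-variance decomposition and the inversion of the quadratic---is mechanical. The one hypothesis to check with care is the boundedness required by Bernstein, which is exactly why the $\hat{\log}$ truncation was introduced: it guarantees $\ell \in [0,1]$ and hence $\bar{\ell}_i \in [0,1]$, so the sub-exponential scale constant can be taken to be $1$.
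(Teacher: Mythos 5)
Your proposal is correct and follows essentially the same route as the paper's own proof: both reduce to the rater-level i.i.d.\ averages $\bar{\ell}_i$ (the paper's $L^{c,n}$), apply the law of total variance to obtain $\tfrac{1}{n}\expvar{c} + \varexp{c}$, invoke the two-sided Bernstein inequality using the boundedness guaranteed by the $\hat{\log}$ truncation, and invert the resulting quadratic in $\epsilon$. Your algebra, including the identity $\sqrt{\tfrac{4}{9}\const^2 + 8 m \const V} = \tfrac{2}{3}\sqrt{\const^2 + 18 m \const V}$, checks out and reproduces the stated bound exactly.
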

\begin{proof}
Let $\k^n \in \K^n$ and $\z^n \in \Z^n$, and define
\begin{equation}
\label{eq:ell_n}
\ell_n(c, h, \k^n, \z^n) \defi \dfrac{1}{n} \sum_{i=1}^n \ell(c, h, k_i, z_i).
\end{equation}
We can write
\begin{align*}
\left|L_\D(c) - L_S(c) \right| 
= \left|L_\D(c) - \dfrac{1}{m} \sum_{i=1}^m \dfrac{1}{n} \sum_{j=1}^n \ell(c, h_i, k_{ij}, z_{ij}) \right|
= \left|L_\D(c) - \dfrac{1}{m} \sum_{i=1}^m \ell(c, h_i, \k^n_i, \z^n_i) \right|.
\end{align*}
Let $L^{c, n} \defi \ell_n(c, H, K^n, Z^n)$ with $H \sim \D_{\H}$ and $K^n, Z^n \sim (\D^H_{\K, \Z})^n$. 
Since we know that $|L^{c, n} - \E L^{c, n}| \le 1$ and $\E[1/m \sum_i L^{c, n}_i]= L_\D(c)$, we can write, using Bernstein's inequality,
\begin{equation}
\label{eq:berstein}
\P\left(\left|L_\D(c) - L_S(c)\right| > \epsilon \right) \le 2 \exp\left(- \dfrac{m\epsilon^2}{2 \Var(L^{c, n}) + \dfrac{2}{3} \epsilon} \right).
\end{equation}
Based on~\eqref{eq:ell_n},  the law of total variance, and the law of large numbers, we can write
\begin{align}
\label{eq:var_decomposition}
\Var(L^{c, n}) 
\nonumber & = \E[\Var(L^{c, n} | H)] + \Var(\E[L^{c, n}| H]), \; H \sim \D_{\H} \\
\nonumber & = \E[\Var(\ell(c, H, K, Z))| H) / n] + \Var(\E[\ell(c, H, K, Z)| H]), \; H, K, Z \sim \D \\ 
& = \dfrac{1}{n}\underbrace{\E[\Var(\ell(c, H, K, Z)) | H)]}_{\expvar{c}} + \underbrace{\Var(\E[\ell(c, H, K, Z)| H])}_{\varexp{c}}, \; H, K, Z \sim \D.
\end{align}
If we let $\expvar{c} \defi \E[\Var(\ell(c, H, K, Z)) | H)]$ and $\varexp{c} \defi \Var(\E[\ell(c, H, K, Z)| H])$, we can rewrite \eqref{eq:berstein} as
\begin{equation}
\label{eq:berstein_n}
\P\left(\left|L_\D(c) - L_S(c)\right| > \epsilon \right) \le \underbrace{2 \exp\left(- \dfrac{m\epsilon^2}{\dfrac{2}{n} \expvar{c} + 2 \varexp{c} + \dfrac{2}{3} \epsilon} \right)}_{\delta}.
\end{equation}
To get the desired bound we only need to follow standard arguments in derivations of this type~\cp[for example]{bucheron2013concentration}, which we detail here for completeness.
Equating the right-hand side of \eqref{eq:berstein_n} to $\delta$, we have
\begin{equation}
\label{eq:delta}
\delta = 2 \exp\left(- \dfrac{m\epsilon^2}{\dfrac{2}{n} \expvar{c} + 2 \varexp{c}  + \dfrac{2}{3} \epsilon} \right).
\end{equation}
Dividing both sides of \eqref{eq:delta} by 2 and taking the natural logarithm (as $\delta/2 > 0)$, we get
\begin{align}
\label{eq:ln}
\nonumber & \ln\left(\dfrac{\delta}{2}\right) = - \dfrac{m\epsilon^2}{\dfrac{2}{n} \expvar{c} + 2 \varexp{c}  + \dfrac{2}{3} \epsilon} \\
\implies & \ln\left(\dfrac{2}{\delta}\right) = \dfrac{m\epsilon^2}{\dfrac{2}{n} \expvar{c} + 2 \varexp{c}  + \dfrac{2}{3} \epsilon}.
\end{align}
If we let $a = \ln\left(\dfrac{2}{\delta}\right)$ and $b = \dfrac{2}{n} \expvar{c} + 2 \varexp{c}$, and rearrange the terms, we can rewrite \eqref{eq:ln} as a quadratic equation in $\epsilon$,
\begin{align*}
m\epsilon^2 - \dfrac{2}{3} a \epsilon - ab &= 0,
\end{align*}
whose solutions are
\begin{equation*}
\epsilon = \dfrac{a \pm \sqrt{a^2 + 9mab}}{3m}.
\end{equation*}
Since we are using $\epsilon$ to upper bound $\left| L_\D(c) - L_S(c) \right|$, we want the largest of the two solutions. We have $a > 0$ and $b \ge 0$, so $a^2 + 9mab > 0$. Thus, we take the solution with the positive sign.
\end{proof}

\vspace{2mm}

\begin{proposition}
For any $m, n > 0$ and any $\delta \in (0, 1]$, we have with probability at least $1 - \delta$ that
\begin{equation*}
L_\D(\tilde{c}^*) \le L_\D(c^*) + 2 \inf_{\alpha} \left[\frac{1}{3m}\left(\const_\alpha + \sqrt{\const_\alpha^2 + 18 m \const_\alpha \left(\dfrac{1}{n}\expvar{\C} + \varexp{\C} \right)}\right) + 2 \alpha\right],
\end{equation*}
where $\const_\alpha \defi \ln(2 |\C_\alpha| / \delta)$.
\end{proposition}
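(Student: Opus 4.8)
The plan is to turn the single-classifier concentration bound of Proposition~\ref{thm:bound_c} into a uniform guarantee over the whole hypothesis class $\C$ via a covering/union-bound argument, and then to convert that uniform bound into an excess-risk bound through the standard empirical-risk-minimisation decomposition. Fixing $\alpha$, I would first control the finite cover $\Calpha$ (finite by the remark preceding the statement). For each $c' \in \Calpha$ I apply Proposition~\ref{thm:bound_c} with failure probability set to $\delta/|\Calpha|$, which produces precisely the constant $\const_\alpha = \ln(2|\Calpha|/\delta)$. To make the bound identical across all cover elements, I replace the per-classifier Bernstein variance $\tfrac1n\expvar{c'} + \varexp{c'}$ by the worst-case quantity $\tfrac1n\expvar{\C} + \varexp{\C}$ attached to the maximal-variance classifier $\ell_\C$, which is valid by monotonicity of the bound provided this quantity dominates the per-classifier one (a point I return to below). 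A union bound over the $|\Calpha|$ elements then gives, with probability at least $1-\delta$, that $|L_\D(c') - L_S(c')| \le \epsilon_\alpha$ simultaneously for all $c' \in \Calpha$, where $\epsilon_\alpha \defi \tfrac{1}{3m}\bigl(\const_\alpha + \sqrt{\const_\alpha^2 + 18 m \const_\alpha (\tfrac1n\expvar{\C} + \varexp{\C})}\bigr)$.

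Next I would extend this from $\Calpha$ to all of $\C$ using the defining covering property. For an arbitrary $c \in \C$ I pick $c' \in \Calpha$ whose loss is everywhere within $\alpha$ of that of $c$; integrating the pointwise inequality against $\D$ and against the empirical measure yields $|L_\D(c) - L_\D(c')| \le \alpha$ and $|L_S(c) - L_S(c')| \le \alpha$. A triangle inequality then gives $|L_\D(c) - L_S(c)| \le \epsilon_\alpha + 2\alpha$ on the good event, uniformly over $c \in \C$.

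Finally I would apply the ERM decomposition. Writing $L_\D(\tilde{c}^*) - L_\D(c^*) = [L_\D(\tilde{c}^*) - L_S(\tilde{c}^*)] + [L_S(\tilde{c}^*) - L_S(c^*)] + [L_S(c^*) - L_\D(c^*)]$, the middle bracket is non-positive by optimality of $\tilde{c}^*$ for $L_S$, and each outer bracket is at most $\epsilon_\alpha + 2\alpha$ by the uniform bound, giving $L_\D(\tilde{c}^*) \le L_\D(c^*) + 2(\epsilon_\alpha + 2\alpha)$. Because the right-hand side is deterministic in $\alpha$ (the $\alpha$-dependence enters only through $\const_\alpha$ and the explicit $2\alpha$, while the variance terms use the $\alpha$-free supremum over $\C$), I may instantiate the whole argument at the minimising $\alpha$, which legitimately moves the $\inf_\alpha$ inside the high-probability statement and produces the stated bound.

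I expect the main obstacle to be the variance step. Unlike the textbook uniform-convergence argument, where one simply bounds a $[0,1]$-valued loss, here the relevant Bernstein variance decomposes as $\tfrac1n\expvar{c} + \varexp{c}$, and one must argue that substituting the worst-case classifier $\ell_\C$ gives a valid upper bound uniformly in $c$ while preserving the favourable $1/n$ scaling of the intra-user term $\expvar{\C}$. Tracking the $\delta/|\Calpha|$ probability budget through the union bound and confirming that $\inf_\alpha$ can be pulled inside the probability are the remaining points requiring care, though both are routine once the variance substitution is settled.
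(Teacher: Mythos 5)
Your proposal is correct and follows essentially the same route as the paper's proof: Bernstein-type concentration applied to each element of the $\alpha$-cover, a union bound producing $\const_\alpha = \ln(2|\Calpha|/\delta)$, the triangle-inequality extension from $\Calpha$ to all of $\C$ giving the extra $2\alpha$, the standard ERM decomposition, and finally instantiation at the minimising $\alpha$. The variance-substitution obstacle you flag is resolved in the paper simply by \emph{defining} the worst-case quantity as $\sup_{c \in \C} \Var(L^{c,n})$---the supremum of the variance of the $n$-averaged loss itself, rather than of the single-sample loss---so that domination over every per-classifier term holds by construction and the $1/n$ scaling is preserved through the same law-of-total-variance decomposition used in Proposition~\ref{thm:bound_c}.
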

\begin{proof}
Define the semi-metric $\distl(c, c') \defi \max_{h \in \H, k \in \K, z \in \Z} |\ell(c(h, k),z) - \ell(c'(h, k), z)|$. Let $\set{A}$ be the set of $\alpha$-nets over $\C$ according to \distl, that is,
\begin{equation*}
\set{A} \defi \{ \C' \vb \C' \subseteq \C, \forall_{c \in \C} \exists_{c' \in \C'} \distl(c, c') \le \alpha\}.
\end{equation*}
Let $\Calpha \in \argmin{\C' \in \set{A}}{|\C'|}$. Clearly, this implies that 
\begin{equation*}
\forall_{c \in \C} \exists_{\hc \in \Calpha} |L_\D(c) - L_\D(\hc)| \le \alpha
\text{ and } |L_S(c) - L_S(\hc)| \le \alpha.
\end{equation*}
Note that $|\C_\alpha|$ is the covering number $N(\C, \distl, \alpha)$~\cp{shalev2014understanding}. Analogously to \eqref{eq:berstein}, we can write, for $\hc \in \Calpha$: 
\begin{equation*}
\P\left(\left|L_\D(\hc) - L_S(\hc)\right| > \epsilon \right) \le 2 \exp\left(- \dfrac{m\epsilon^2}{2 \Var(L^{\hc, n}) + \dfrac{2}{3} \epsilon} \right).
\end{equation*}
Define $\Var(\ell_{\C}) \defi \sup_c \Var(L^{c, n})$. Recalling that $\Calpha \subseteq \C$, it follows that
\begin{equation*}
\forall_{\hc \in \Calpha} \; \P\left(\left|L_\D(\hc) - L_S(\hc)\right| > \epsilon \right) 
\le 2 \exp\left(- \dfrac{m\epsilon^2}{2 \Var(\ell_{\C}) + \dfrac{2}{3} \epsilon} \right).
\end{equation*}
If we apply the same argument in \eqref{eq:var_decomposition} to $\Var(\ell_{\C})$, we can write
\begin{equation*}
\forall_{\hc \in \Calpha} \P\left(\left|L_\D(\hc) - L_S(\hc)\right| > \epsilon \right) 
\le 2 \exp\left(- \dfrac{m\epsilon^2}{\dfrac{2}{n} \expvar{\C} + 2 \varexp{\C} + \dfrac{2}{3} \epsilon} \right).
\end{equation*}
Applying the union bound over $\Calpha$, we get
\begin{equation}
\label{eq:berstein_union}
\P\left(\exists_{\hc \in \Calpha}: \left|L_\D(\hc) - L_S(\hc)\right| > \epsilon \right) \le \underbrace{2 |\C_{\alpha}| \exp\left(- \dfrac{m\epsilon^2}{\dfrac{2}{n} \expvar{\C} + 2 \Var(\E[\ell_{\C}|H]) + \dfrac{2}{3} \epsilon} \right)}_{\delta}.
\end{equation}
If we equate the right-hand side of \ref{eq:berstein_union} to $\delta$ and isolate $\epsilon$, as done in the proof of Proposition~\ref{thm:bound_c}, we get
\begin{equation}
\label{eq:epsilon}
\epsilon_\alpha = \frac{1}{3m}\left[\ln\left(\dfrac{2 |\C_\alpha|}{\delta}\right) + \sqrt{\ln\left(\frac{2 |\C_\alpha|}{\delta}\right)^2 + 18 m \ln\left(\frac{2 |\C_\alpha|}{\delta}\right) \left(\dfrac{\expvar{\C}}{n} + \varexp{\C}\right)}\right],
\end{equation}
where we used the subscript in $\epsilon$ to note its dependency on $\alpha$.
We have already shown that with probability at least $1 -\delta$, we have that $\forall_{\hc \in \Calpha} \left|L_\D(\hc) - L_S(\hc)\right| \le \epsilon_\alpha$. Given $c \in \C$, we can pick $\hc \in \Calpha$ such that $\distl(c, \hc) \leq \alpha$. Then,
\begin{align}
\label{eq:dist}
\left|L_\D(c) - L_S(c)\right|
\nonumber & = |L_\D(c) - L_S(\hc) + L_S(\hc) - L_S(c)| \\
\nonumber & \le |L_\D(c) - L_S(\hc)| + |L_S(\hc) - L_S(c)| \\
\nonumber & \le |L_\D(c) - L_S(\hc)| + \alpha \\
\nonumber & = |L_\D(c) - L_\D(\hc) + L_\D(\hc) - L_S(\hc)| + \alpha \\
\nonumber & \le |L_\D(c) - L_\D(\hc)| + | L_\D(\hc) - L_S(\hc)| + \alpha \\
& \le  |L_\D(\hc) - L_S(\hc)| + 2 \alpha.
\end{align}
Thus, we can say that, with probability at least $1 - \delta$, we have $\forall_{c \in \C} \left|L_\D(c) - L_S(c)\right| \le \epsilon_\alpha + 2 \alpha$. Since $\alpha$ was defined arbitrarily, we can write
\begin{equation}
\label{eq:uniform_ineq}
\forall_{c \in \C} \left|L_\D(c) - L_S(c)\right| \le \inf_{\alpha} (\epsilon_\alpha + 2 \alpha).
\end{equation}
Based on \eqref{eq:uniform_ineq}, we can write
\begin{align*}
L_\D(\tilde{c}^*) \le L_S(\tilde{c}^*) + \inf_{\alpha} (\epsilon_\alpha + 2 \alpha) \le L_S(c^*) + \inf_{\alpha} (\epsilon_\alpha + 2 \alpha) \le L_\D(c^*) + 2 \inf_{\alpha} (\epsilon_\alpha + 2 \alpha). 
\end{align*}
\end{proof}

\subsection{Additional theoretical results}
\label{seca:additional_results}

We now present additional theoretical results that complement those in the main paper. We start with a result upper bounding $|\C_\alpha|$, the measure of complexity used in Proposition~\ref{thm:bound}.

\vspace{3mm}

\begin{proposition}
\label{thm:bound_c_alpha}
    Let $\alpha \in (0, 1)$. 
    Then, the smallest set $\mathcal{C}_\alpha \subseteq \mathcal{C}$ such that 
    \begin{equation*}
        \forall_{c \in \C} \exists_{c' \in \Calpha} \forall_{h \in \H, k \in \mathcal{K}, z \in \Z} \; |\ell(c(h, k), z) - \ell(c'(h, k), z)| \le \alpha 
    \end{equation*}
    has size at most $(1/\alpha)^{|\mathcal{H}| |\mathcal{K}|}$.
\end{proposition}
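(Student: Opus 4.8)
The plan is to exploit the product structure of $\C$ over the finite index set $\H \times \K$ and build the cover one coordinate at a time. Since $\H$ and $\K$ are finite, any classifier $c \in \C$ is completely determined by its family of scalar probability outputs $(c(h,k))_{(h,k) \in \H \times \K} \in [0,1]^{|\H||\K|}$, so I would identify $\C$ with this cube. Moreover, by the Bradley--Terry relation $\sigma(-a) = 1 - \sigma(a)$, the loss at a single tuple reduces to the truncated cross-entropy $\ell(c(h,k),z) = -z\,\hat{\log}\,c(h,k) - (1-z)\,\hat{\log}(1 - c(h,k))$, i.e. a function of the single scalar $c(h,k) \in [0,1]$ and of $z \in \Z$. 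Consequently $\distl(c,c') = \max_{(h,k)}\max_{z \in \Z} |\ell(c(h,k),z) - \ell(c'(h,k),z)|$ decouples across coordinates, and it suffices to $\alpha$-cover the one-dimensional output range $[0,1]$ with respect to the scalar loss and then take a product.

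For a single coordinate, the plan is to discretise directly in \emph{loss space} rather than in output space. I would use that $u \mapsto \ell(u,1) = -\hat{\log}\,u$ is continuous, monotone decreasing, and---thanks to the capping at $\ell_{\min}$ and the normalisation---takes values in $[0,1]$, while symmetrically $u \mapsto \ell(u,0) = -\hat{\log}(1-u)$ is increasing and $[0,1]$-valued. Placing representatives at the preimages of the regular grid $\{0,\alpha,2\alpha,\dots\}$ of the range $[0,1]$ yields a finite set $V \subseteq [0,1]$ of size at most $1/\alpha$ such that every $u \in [0,1]$ admits a $v \in V$ with $\max_{z}|\ell(u,z)-\ell(v,z)| \le \alpha$. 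The structural fact that makes a \emph{single} grid serve both labels is the derivative ratio $|\,\mathrm{d}\ell(\cdot,1)/\mathrm{d}\ell(\cdot,0)\,| = u/(1-u)$, which is below $1$ for $u \le 1/2$ and above $1$ for $u \ge 1/2$: controlling the steeper of the two losses on each half of $[0,1]$ then automatically controls the flatter one, so one grid of $1/\alpha$ levels governs $z=0$ and $z=1$ at once.

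With the per-coordinate net $V$ in hand, I would set $\Calpha := V^{|\H||\K|}$, the classifiers taking all their values in $V$. Rounding each coordinate of an arbitrary $c$ to its representative in $V$ gives a $c' \in \Calpha$ with $\distl(c,c') \le \alpha$, so $\Calpha$ is a valid cover and $|\Calpha| \le |V|^{|\H||\K|} \le (1/\alpha)^{|\H||\K|}$; since the minimal cover is no larger than this explicit one, the bound follows. The main obstacle is the second step: forcing the one-dimensional net to have size $1/\alpha$ \emph{while} simultaneously bounding both labels' losses. A naive uniform discretisation of the output would fail here, since the cross-entropy is steep near the endpoints of $[0,1]$ and the truncation Lipschitz constant is of order $1/(|\ell_{\min}|\,e^{\ell_{\min}})$, which is enormous; it is precisely the loss-space discretisation together with the monotonicity and derivative-ratio observation above that circumvents this. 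The remaining bookkeeping---checking the endpoint levels and that the two half-intervals together consume at most $1/\alpha$ levels---is routine.
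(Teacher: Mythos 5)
Your proposal is correct and takes essentially the same route as the paper's proof: both identify $\C$ with $[0,1]^{|\mathcal{H}||\mathcal{K}|}$, discretise each coordinate in \emph{loss space} with a single grid of fewer than $1/\alpha$ levels, use the monotonicity of the (truncated) log losses and the fact that the steeper of the two label losses switches at $u = 1/2$ so that snapping the steeper one on each half also controls the flatter one, and then take the product over coordinates to get $(1/\alpha)^{|\mathcal{H}||\mathcal{K}|}$. One small slip worth fixing: the derivative ratio should be $\left| \mathrm{d}\ell(\cdot,1)/\mathrm{d}\ell(\cdot,0) \right| = (1-u)/u$, not $u/(1-u)$ --- on $u \le 1/2$ it is $\ell(\cdot,1)$ that is steeper --- but your operative rule of controlling the steeper loss on each half is exactly the intended mechanism, so the argument goes through unchanged.
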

\begin{proof}
    First, there exists a finite subset $\mathcal{M} \subseteq [0, 1]$ of size $\alpha$ such that
    \begin{align*}
        \forall_{u \in [0,1]} \exists_{v \in \mathcal{M}} | u - v | \leq \alpha \, ;
    \end{align*}
    concretely, we may take $q = \lceil \tfrac{1}{2\alpha} \rceil < \tfrac{1}{\alpha}$, and let $\mathcal{M} = \{ \tfrac{2i-1}{2q} : i \in \{1,\ldots,q\} \}$.

    Now, fix a classifier $c : \mathcal{H} \times \mathcal{K} \rightarrow \mathbb{R}$. We will specify a classifier $\hat{c} \in \mathcal{M}^{\mathcal{H} \times \mathcal{K}}$ with
    \begin{align}\label{eq:alpha-close}
        | \ell(c(h, k), z) - \ell(\hat{c}(h, k), z)  | < \alpha
    \end{align}
    for all $(h, k, z) \in \mathcal{H} \times \mathcal{K} \times \mathcal{Z}$. To do so, consider a tuple $(h, k) \in \mathcal{H} \times \mathcal{K}$. If $c(h, k) < 0.5$, we set $\hat{c}(h, k)$ such that $\ell(\hat{c}(h, k), 0) \in \mathcal{M}$ is the closest value in $\mathcal{M}$ to $\ell(c(h, k), 0)$, in particular within distance $\alpha$.  Then, since the derivative of $\log$ is positive and decreasing, we also have that $|\ell(\hat{c}(h, k), 1) - \ell(c(h, k), 0)| \leq \alpha$. If instead $\ell(c(h, k), 0) \geq 0.5$, we set $\hat{c}(h, k)$ so that $\ell(\hat{c}(h, k), 1) \in \mathcal{M}$ is the closest value in $\mathcal{M}$ to $\ell(c(h, k), 1)$, and the conclusion follows similarly. Thus, we have exhibited $\hat{c}$ with the property described in~\eqref{eq:alpha-close}, and $\hat{c}$ is guaranteed to lie in a set of size at most $(1/\alpha)^{|\mathcal{H}| |\mathcal{K}|}$, as required.
\end{proof}

The analysis in Proposition~\ref{thm:bound} is based on the concept of covering numbers of function spaces, and Proposition~\ref{thm:bound_c_alpha} provides a simple upper bound for the covering number of $\C_{\alpha}$. It is also possible to derive results based on other measures of complexity of function spaces. We provide one illustrative example here, based on the notion of Rademacher complexity. This new result also assumes that the classifiers in \C\ are logistic regression models.

\begin{proposition}
    \label{thm:bound_rademacher}
    Under the assumptions of Section~\ref{sec:reward_model_personalisation}, consider a hypothesis class $\mathcal{C}$ comprising logistic regression models over concatenated embeddings of user, prompt, and responses, with weights bounded in $L^2$ norm by $W$, and embedding $L^2$ norms bounded by 1. Let $c^*$ be the optimal model in this class for the population distribution $\mathcal{D}$, and let $\tilde{c}^*$ be the optimiser of the empirical loss. Then, with probability at least $1-\delta$, we have
    \begin{align*}
        L_\mathcal{D}(\tilde{c}^*) \leq L_\mathcal{D}(c^*) + \frac{2W}{\sqrt{m}} + 3 \sqrt{\frac{g}{2m}} + \frac{1}{3m}\Bigg[ g + \sqrt{g^2 + 18gm\bigg( \frac{1}{n} \mathbb{E}[\mathbb{V}(\ell_{c^*} |H )] + \mathbb{V}(\mathbb{E}[\ell_{c^*} |H])] \bigg)} \Bigg] \, ,
    \end{align*}
    where $g = \log(6/\delta)$.
\end{proposition}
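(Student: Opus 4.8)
The plan is to bound the excess risk by the standard two–term decomposition, but to use \emph{different} tools for the two terms: a Rademacher–complexity uniform bound for the part that ranges over $\C$, and the single–classifier Bernstein bound of Proposition~\ref{thm:bound_c} for the part involving only $c^*$. Writing
\begin{align*}
L_\D(\tilde{c}^*) - L_\D(c^*)
&= \underbrace{\big(L_\D(\tilde{c}^*) - L_S(\tilde{c}^*)\big)}_{(A)}
 + \underbrace{\big(L_S(\tilde{c}^*) - L_S(c^*)\big)}_{\le\,0}
 + \underbrace{\big(L_S(c^*) - L_\D(c^*)\big)}_{(B)},
\end{align*}
the middle term is non-positive since $\tilde{c}^*$ minimises $L_S$, so it suffices to control $(A)$ and $(B)$ on a high-probability event and combine them by a union bound, splitting $\delta$ into two thirds so that $g=\log(6/\delta)$ emerges in both.

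For $(B)$ the classifier $c^*$ is \emph{fixed}, so Proposition~\ref{thm:bound_c} applies verbatim (with its law-of-total-variance split): with probability at least $1-\delta/3$, $L_S(c^*)-L_\D(c^*)\le \tfrac{1}{3m}\big[\const+\sqrt{\const^2+18\const m(\tfrac1n\expvar{c^*}+\varexp{c^*})}\big]$, where taking the Bernstein budget $\delta/3$ gives $\const=\ln(2/(\delta/3))=\ln(6/\delta)=g$. This is exactly the variance-dependent term of the statement, specialised to $c^*$.

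For $(A)$ I would bound $\sup_{c\in\C}\big(L_\D(c)-L_S(c)\big)$ by a Rademacher argument carried out at the level of \emph{raters}, since the raters are the i.i.d.\ units of $S$ (the individual examples are not i.i.d.). Viewing rater $i$ as a single super-example whose loss is the within-rater average $\ell_n(c,h_i,\k^n_i,\z^n_i)\in[0,1]$, symmetrisation gives $\E[\sup_{c}(L_\D(c)-L_S(c))]\le 2\,\E[\hat{\mathcal{R}}]$, where $\hat{\mathcal{R}}$ is the empirical Rademacher complexity of the averaged, capped logistic-loss class over the $m$ raters, with one Rademacher sign per rater. Because the (capped) logistic loss is Lipschitz in the margin $\dot{\w}{\psi}$ (the label only flips a sign), a contraction step reduces $\hat{\mathcal{R}}$ to the Rademacher complexity of the linear margin class $\{\dot{\w}{\cdot}:\|\w\|\le W\}$; the key observation is that the per-rater averaged embedding $\bar\psi_i=\tfrac1n\sum_j\psi_{ij}$ still satisfies $\|\bar\psi_i\|\le 1$, so the standard linear-class bound yields $\hat{\mathcal{R}}=O(W/\sqrt m)$ and hence the $2W/\sqrt m$ term. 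Converting this expectation into a high-probability statement via the empirical Rademacher complexity requires two bounded-differences (McDiarmid) applications at the rater level—one concentrating the uniform deviation around its mean, one passing from expected to empirical complexity—each controlled by the $1/m$ per-rater sensitivity; with budget $\delta/3$ (so the internal $\log(2/(\delta/3))=g$) these contribute $3\sqrt{g/(2m)}$. A union bound over the events for $(A)$ and $(B)$ (total failure at most $2\delta/3\le\delta$) then gives the claim.

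The main obstacle is the Rademacher step, precisely because of the grouped, non-i.i.d.\ structure: one must symmetrise over raters with a \emph{single} sign shared across each rater's $n$ examples, and check that the contraction through the logistic nonlinearity applied to the within-rater \emph{average} loss collapses to the linear class without degrading the $1/\sqrt m$ rate. This is exactly where the averaging (and the resulting $\|\bar\psi_i\|\le1$) does the work; a careless example-level argument with independent signs would be invalid here and would incorrectly suggest a $1/\sqrt{mn}$ rate. Nailing the exact constant in $2W/\sqrt m$ (as opposed to $O(W/\sqrt m)$) is the one place where the choice of contraction inequality matters and deserves care.
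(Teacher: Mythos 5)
Your overall architecture matches the paper's proof closely: the same three-term decomposition with the middle term $L_S(\tilde{c}^*) - L_S(c^*) \le 0$, Bernstein's inequality (via Proposition~\ref{thm:bound_c}) applied to the \emph{fixed} classifier $c^*$ to produce the variance-dependent term, rater-level symmetrisation plus two McDiarmid applications for the uniform term, and the same three-way $\delta/3$ budget yielding $g = \log(6/\delta)$. (Small bookkeeping slip: with three events each at $\delta/3$ the total failure probability is $\delta$, not $2\delta/3$; harmless.) However, the one step you yourself flag as delicate---the contraction through the within-rater average---is carried out incorrectly, and the averaged-embedding trick you propose does not work.

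The class you must control assigns to rater $i$ the value $\frac{1}{n}\sum_{j=1}^n \varphi_{ij}\left(\dot{\w}{\psi_{ij}}\right)$, where each $\varphi_{ij}$ is a ($1$-Lipschitz, capped) logistic loss whose form depends on $z_{ij}$. This is \emph{not} a Lipschitz function of the single scalar $\dot{\w}{\bar{\psi}_i}$, so scalar contraction cannot collapse it to the linear class over the averaged embeddings $\bar{\psi}_i$. Concretely, take $n=2$, $\psi_{i2} = -\psi_{i1}$, and both labels equal to $1$: then $\bar{\psi}_i = 0$, so your reduced class is a singleton with Rademacher complexity zero, yet the averaged loss $\frac{1}{2}\left[\varphi(u) + \varphi(-u)\right]$ with $u = \dot{\w}{\psi_{i1}}$ still varies nontrivially with $\w$---the claimed bound is false. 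The paper's route avoids this: pull the within-rater average outside the supremum,
\begin{equation*}
\E_{\sigma}\sup_{c \in \C}\frac{1}{m}\sum_{i=1}^m \sigma_i \, \frac{1}{n}\sum_{j=1}^n \ell(c(h_i, k_{ij}), z_{ij}) \;\le\; \frac{1}{n}\sum_{j=1}^n \E_{\sigma}\sup_{c \in \C}\frac{1}{m}\sum_{i=1}^m \sigma_i \, \ell(c(h_i, k_{ij}), z_{ij}),
\end{equation*}
then apply scalar contraction separately for each fixed $j$ (the rater-level signs $\sigma_i$ are shared across $j$, which is fine because the bound is per-summand); each term is the complexity of a $1$-Lipschitz composition of the linear class over $m$ embeddings of norm at most $1$, hence at most $W/\sqrt{m}$, and averaging over $j$ preserves this, giving $2W/\sqrt{m}$ after symmetrisation with no degradation of the $1/\sqrt{m}$ rate. (An alternative repair is Maurer's vector contraction, treating the per-rater average as $1/\sqrt{n}$-Lipschitz in $\ell^2$ over the $n$-vector of margins with independent signs on all $mn$ coordinates; that recovers the same rate at the cost of a $\sqrt{2}$ factor, so it would not match the stated constant.) With the contraction step replaced as above, the rest of your argument goes through as in the paper.
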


\begin{proof}

First, note that the classification loss we are concerned with is a sum of non-i.i.d.\ terms:
\begin{align*}
    L_S(c) = \frac{1}{m}\sum_{i=1}^m \frac{1}{n} \sum_{j=1}^n \ell(c(h_i, x_{ij}, y_{ij}, y'_{ij}), z_{ij}) \, ,
\end{align*}
so we cannot immediately apply the classical methods of Rademacher complexity analysis (see, for example, Chapter~26 of \cp{shalev2014understanding}).

However, our data is i.i.d.\ at the level of the indices $i$, so we can begin by applying the framework of Rademacher complexity bounds at this level. We broadly follow the proof structure of \citep[][Theorem~26.5]{shalev2014understanding}.

First, we have that
\begin{align*}
    \sup_{c \in \mathcal{C}} (L_\mathcal{D}(c) - L_S(c)) \, ,
\end{align*}
viewed as a function of $S$, satisfies the bounded-difference condition required for McDiarmid's inequality, from which it follows that this quantity concentrates around its expectation; with probability at least $1-\delta$, we have
\begin{align*}
    \sup_{c \in \mathcal{C}} (L_\mathcal{D}(c) - L_S(c))
    \leq 
    \mathbb{E}_{S}\left[\sup_{c \in \mathcal{C}} (L_\mathcal{D}(c) - L_S(c))\right] + \sqrt{\frac{\log(2/\delta)}{2m}} \, .
\end{align*}

The key supporting result \citep[][Lemma~26.2]{shalev2014understanding} then allows us to relate the expected quantity on the right-hand side to the Rademacher complexity of our hypothesis class. Concretely, we have
\begin{align*}
     \mathbb{E}_{S}\left[\sup_{c \in \mathcal{C}} (L_\mathcal{D}(c) - L_S(c))\right] \leq 2 \mathbb{E}_{S'}[R(\ell \circ \mathcal{C} \circ S')] \, ,
\end{align*}
where
\begin{align}\label{eq:rset}
    \ell \circ \mathcal{C} \circ S' = \Bigg\{
        \bigg( 
            \frac{1}{n} \sum_{j=1}^n \ell(c(h_i, k_{ij}), z_{ij})
        \bigg)_{i=1}^m :
    c \in \mathcal{C}
    \Bigg\}
    \, ,
\end{align}
and $R$ denotes Rademacher complexity of the input set. 

Now, the bounded-differences inequality required to employ McDiarmid's inequality applies to the Rademacher complexity as a function of $S'$ too, $R(\ell \circ \mathcal{C} \circ S')$, so that we may obtain with probability $1-\delta$:
\begin{align*}
    \mathbb{E}_{S'}[ R(\ell \circ \mathcal{C} \circ S')] \leq R(\ell \circ \mathcal{C} \circ S) + \sqrt{\frac{\log(2/\delta)}{2m}} \, .
\end{align*}
Putting all these parts together, yields
\begin{align}\label{eq1}
    \sup_{c \in \mathcal{C}} (L_\mathcal{D}(c) - L_S(c)) \leq 2 R(\ell \circ \mathcal{C} \circ S) + 3 \sqrt{\frac{\log(4/\delta)}{2m}}
\end{align}
with probability at least $1-\delta$.

Finally, denoting $c^*$ the optimiser of the true loss, and $\tilde{c}^*$ the optimiser of the empirical loss, we have
\begin{align*}
    L_\mathcal{D}(\tilde{c}^*) - L_{\mathcal{D}}(c^*) 
    & = (L_\mathcal{D}(\tilde{c}^*) - L_{S}(\tilde{c}^*)) + 
    (L_S(\tilde{c}^*) - L_{S}(c^*)) + 
    (L_S(c^*) - L_{\mathcal{D}}(c^*))  \\
    & \leq (L_\mathcal{D}(\tilde{c}^*) - L_{S}(\tilde{c}^*)) + 
    (L_S(c^*) - L_{\mathcal{D}}(c^*)).
\end{align*}
The first term is bounded by the inequality in Equation~\eqref{eq1}. The second term may be bounded by Bernstein's inequality: note that this deviates slightly from the approach set forward in \citep[][Chapter~26]{shalev2014understanding}, but we can make use of Proposition~\ref{thm:bound_c} here. This ultimately results in a bound of the form:
{\small
\begin{align*}
    L_\mathcal{D}(\tilde{c}^*) \leq L_\mathcal{D}(c^*) + 2 R( & \ell \circ \mathcal{C} \circ S) + 3 \sqrt{\frac{g}{2m}} 
    + \frac{1}{3m}\Bigg[ g + \sqrt{g^2 + 18gm\bigg( \frac{1}{n} \mathbb{E}[\mathbb{V}(\ell_{c^*} |H )] + \mathbb{V}(\mathbb{E}[\ell_{c^*} |H])] \bigg)} \Bigg]
\end{align*}
}%
where $g = \log(6/\delta)$.

Lastly, we derive an explicit form for the Rademacher complexity of our predictor, making use of the model class assumptions introduced in the statement of the result. To bound the empirical Rademacher complexity $R(\ell \circ \mathcal{C} \circ S)$, we can make a standard argument using several manipulations based on the Rademacher calculus, as described in \citep[][Chapter~26]{shalev2014understanding}. First, by the sum property, we focus on a single summand $j$ per dimension. Next, by the contraction lemma, and the fact that the logarithm-sigmoid composition is 1-Lipschitz, we may remove these elements from the set under consideration, revealing the linear function class that we hypothesise. With the assumptions on weight norms and embeddings made in the statement,  we then obtain an overall Rademacher complexity of $W/\sqrt{m}$.  This results in an overall bound of:
\begin{align*}
    L_\mathcal{D}(\tilde{c}^*) \leq L_\mathcal{D}(c^*) + \frac{2W}{\sqrt{m}} + 3 \sqrt{\frac{g}{2m}} + \frac{1}{3m}\Bigg[ g + \sqrt{g^2 + 18gm\bigg( \frac{1}{n} \mathbb{E}[\mathbb{V}(\ell_{c^*} |H )] + \mathbb{V}(\mathbb{E}[\ell_{c^*} |H])] \bigg)} \Bigg] \, ,
\end{align*}
as required.
\end{proof}

\section{Details of the experiments and additional empirical results}
\label{seca:experiment_details}

We tried to keep our experimental setup as simple as possible to provide a realistic estimate of out-of-the-box performance. In particular, we did not carry out an extensive search over network architectures or hyper-parameters, keeping most of them at sensible defaults from the outset. We also did not specialise the training procedure to RFM's architecture, using standard gradient descent to minimise the training and adaptation losses. 

As mentioned in the main text, we used~\ctp{gdm2024gemma} Gemma 1.1 2B to implement RFM and the baselines. The maximum context and response lengths were set to $l_x = l_y = 1,525$ tokens. Training and adaptation were carried out using gradient descent with a learning rate of $10^{-5}$. We performed a random $90\%$--$10\%$ split of the training set and used the error in the smaller subset (a validation set) as a criterion to select the model to undergo adaptation.  

\subsection{Empirical analysis}
\label{seca:empirical_analysis}

\paragraph{Problem setup.} The following $13$ features were used:
\begin{itemize}
    \item $\phi_1(x, y)$: the length of $y$.
    \item $\phi_2(x, y)$: the average sentence length in $y$.
    \item $\phi_3(x, y)$: the average word length in $y$.
    \item $\phi_4(x, y)$: the proportion of characters in $y$ that are vowels.
    \item $\phi_5(x, y)$: the proportion of characters in $y$ that are punctuation symbols.
    \item $\phi_6(x, y)$: the proportion of transitions between words in $y$ that are alliterations.
    \item $\phi_7(x, y)$: the proportion of words in $y$ that are adjectives.
    \item $\phi_8(x, y)$: the proportion of words in $y$ that are adverbs.
    \item $\phi_9(x, y)$: the proportion of words in $y$ that are verbs.
    \item $\phi_{10}(x, y)$: the proportion of words in $y$ that are nouns.
    \item $\phi_{11}(x, y)$: the proportion of words in $y$ that are synonyms of one of the words in $x$.
    \item $\phi_{12}(x, y)$: the proportion of words in $y$ that are antonyms of one of the words in $x$.
    \item $\phi_{13}(x, y)$: the proportion of words in $y$ that also appear in $x$.
\end{itemize}
All the features were normalised to fall in the interval $[0,1]$ and then centered around their median value so that positive values represent values above the median (and vice versa). Our features were designed to capture potentially conflicting subjective criteria, yet they remain inherently neutral (\textsl{i.e.}, possessing no inherent valence). These features also vary in how they access inputs $x$ and $y$, which consequently influences their anticipated learning difficulty. We identify three categories of features. ``Structural'' features (\textsl{e.g.}, $\phi_1, \phi_2, \phi_3$) treat $x$ and $y$ as raw strings devoid of meaning and are thus expected to be straightforward to learn. In contrast, ``syntactic'' features (\textsl{e.g.}, $\phi_7, \phi_8, \phi_9$) analyze the grammatical role of words in a sentence, and their dependence on inter-word relationships makes them more challenging to learn. Finally, ``semantic'' features (\textsl{e.g.}, $\phi_{11}, \phi_{12}$), which rely on the meaning of words, are likely the most difficult to learn.

As discussed in Section~\ref{sec:experiments}, we defined a user as a vector $\vomega \in \{-1, 1\}^{13}$. Given an example $(x, y, y')$ and a user $\vomega$, the corresponding preference $z$ was determined as 
\begin{equation}
\label{eq:labelling}
z = \ind\{ \dot{\vphi(x, y)}{\vomega} > \dot{\vphi(x, y')}{\vomega}\},
\end{equation}
where $\ind\{\cdot\}$ is the indicator function and $\{\vphi_i(x, y)\}_{i=1}^{13}$ are the features described above.

We defined a distribution over \H, $\D_{\H}$, parameterised by a single parameter $p$ determining $\P(\omega_i=1)$ for each $i=1, 2, ..., 13$ independently. That is, in order to sample a user $h$ from $\D_{\H}$, for each $i=1, 2, ..., 13$ we sample $Z \sim \mathrm{Bernoulli}(p)$ and set $\omega_{h i} = 2Z - 1$.
The parameter $p$ of the distribution $\D_{\H}$ allows us to control how homogeneous in terms of preferences groups sampled from it tend to be. The entropy of $\D_{\H}$ peaks at $p=0.5$, resulting in the maximum degree of disagreement. As $p \ra 0$ or $p \ra 1$, the preferences become more homogeneous. Figure~\ref{fig:rater_disagreement} illustrates the effect of $p$ on the homogeneity of the users' preferences.

\begin{figure}[ht]
\begin{center}
\centerline{\includegraphics[width=0.4\textwidth]{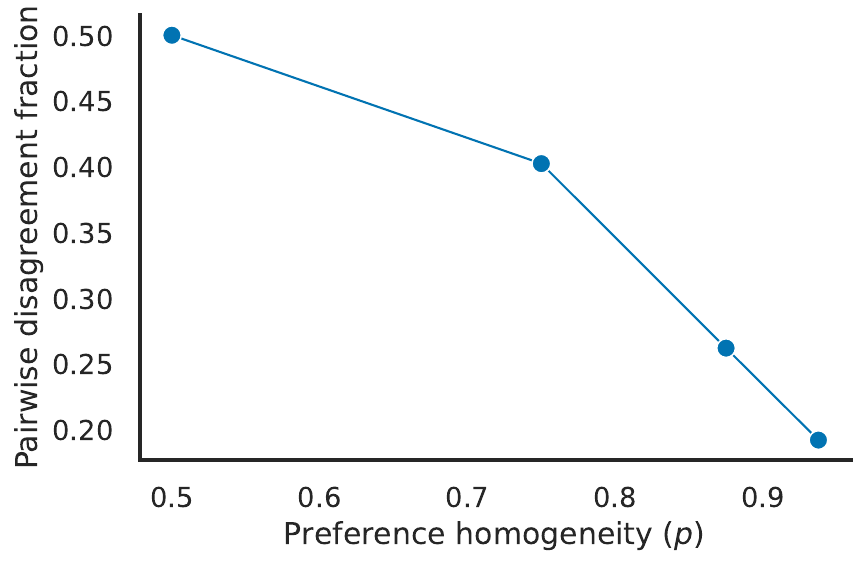}}
\caption{Pairwise user disagreement as a function of $p$. For each value of $p \in \{0.5, 0.75, 0.875, 0.9375\}$, we sampled $100$ users from $\D_{\H}$ and computed the fraction of the examples in the UltraFeedback training set on which they disagreed.}
\label{fig:rater_disagreement}
\end{center}
\end{figure}

We used sets of raters \hH\ of different sizes: $m \in \{20, 40, 60, 80, 100\}$. Each run used a different set \hH\ with raters sampled from $\D_{\H}$ as explained above. We also sampled a set of $500$ held-out users per experiment using the same procedure. The held-out users were used to assess the inter-user generalisation of the models.

Unless otherwise noted, the default values for the parameters used in the experiments were: $m=60$ raters, preference homogeneity level $p=0.7$, and $\n=30$ examples used for adaptation.
 
\paragraph{Training and adaptation.} Training was carried out for $6,000$ parameter updates with a batch size of $32$. This means that the training procedure went over the entire UltraFeedback training set approximately three times. Each time the example $(x_i, y_i, y'_i,)$ was encountered, a new rater \hh\ was sampled uniformly at random from~\hH\ and the preference $z_i$ was determined through~\eqref{eq:labelling} with the corresponding $\vomega_{\hh}$. For the experiments shown in Figure~\ref{fig:experiment_1} specifically---with a varying number $m$ of raters---we needed to make sure that each rater was trained with roughly the same number of examples $n$. So, we extended training proportionally to $m$ (Figure~\ref{fig:num_training_raters_training_curves} shows the validation and test errors along training in this experiment).

To perform the adaptation, we sampled $\n \in \{10, 30, 50, 70, 90 \}$ examples from UltraFeedback's training set uniformly at random. Analogously to training, each time the example $(x_i, y_i, y'_i,)$ was encountered a user $h$ was sampled uniformly at random from the set of $500$ held-out users, with the corresponding preference $z$ determined through \eqref{eq:labelling}. We carried out a total of $6,000$ parameter updates with a batch of $32$ for all $500$ held-out users combined (so, only around $384$ updates per user, in expectation).

For each experiment, we carried out training followed by adaptation $5$ times. All the numbers reported are averages over the corresponding $5$ runs. The metric we report for adaptation in Figures~\ref{fig:empirical_analysis} and~\ref{fig:in_context_comparison} is the \emph{inter-user test accuracy}, an estimate of the fraction of examples in the test set correctly classified by the models, per user. To compute it, we went over the test set $50$ times, always assigning to each example a user sampled uniformly at random from the set of $500$ held-out users. This means that each user is evaluated in approximately $100$ examples, in expectation. The accuracy reported in Figures~\ref{fig:empirical_analysis} and~\ref{fig:in_context_comparison} is the average number of correctly classified examples over the $50$ passes over the test set.

\paragraph{Additional results.} Figures~\ref{fig:num_training_raters_training_curves} and~\ref{fig:bernoulli_prob_training_curves} show the baseline and RFM's accuracy on the test and validation sets during training. In Figure~\ref{fig:num_training_raters_training_curves} we see the effect of varying the number of raters $m$, while in Figure~\ref{fig:bernoulli_prob_training_curves} we see the effect of varying the preference homogeneity parameter $p$. The values reported are the \emph{intra-user validation accuracy} and the \emph{intra-user test accuracy}. Their computation is analogous to that of the \emph{inter-user test accuracy} explained above, with held-out users replaced by raters and the test set replaced by the validation set when applicable. Note that, in contrast with the adaptation error shown in Figure~\ref{fig:empirical_analysis}, the training error tends to go up with the number of raters $m$. This makes sense, since each rater corresponds to a classification problem being solved in parallel.

\begin{figure*}[htb!]
\begin{center}
\centerline{\includegraphics[width=\textwidth]{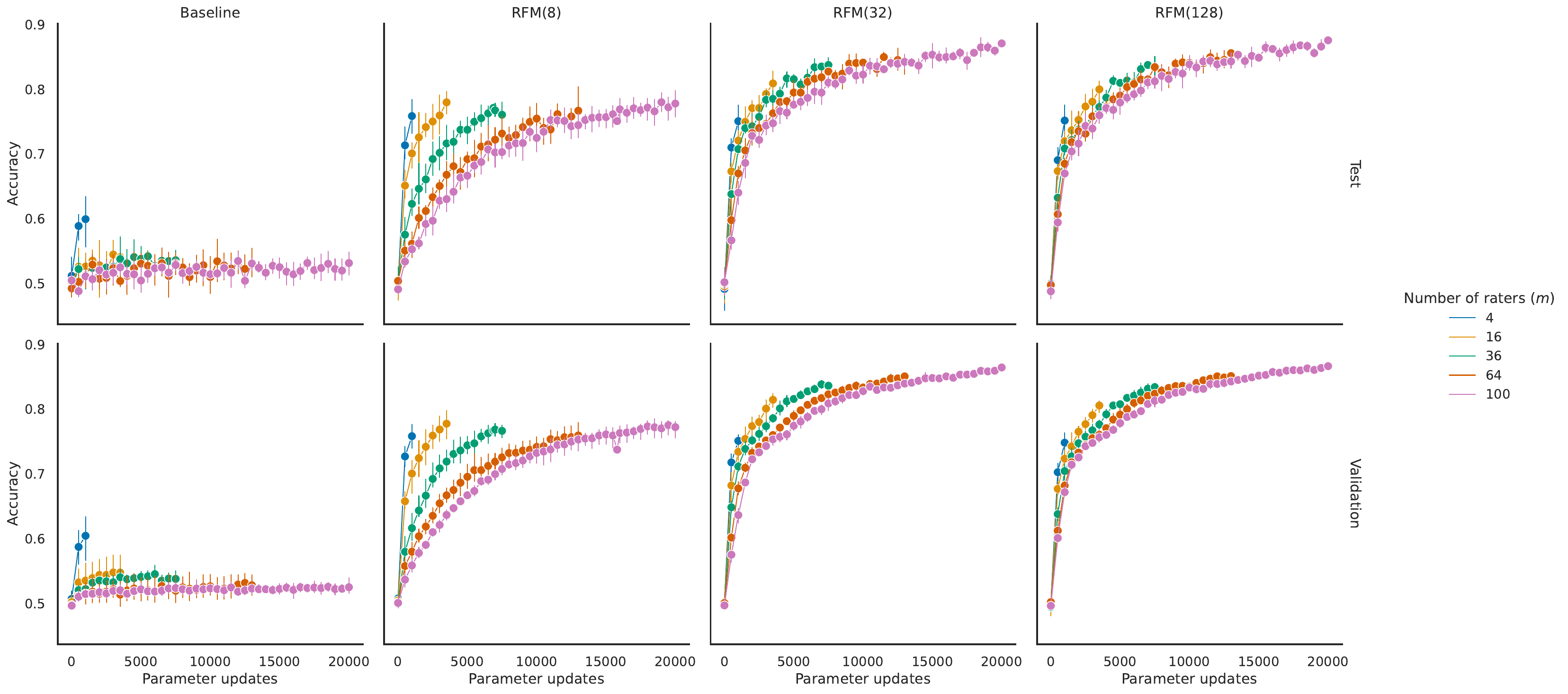}}
\caption{Validation and test accuracies as a function of update steps and number $m$ of training raters during the training phase (estimate of intra-user generalisation). The number of parameter updates is proportional to $m$ to ensure that all raters see roughly the same number $n$ of training examples. Error bars are $99\%$ confidence intervals over $5$ runs. }
\label{fig:num_training_raters_training_curves}
\end{center}
\end{figure*}

\begin{figure*}[htb!]
\begin{center}
\centerline{\includegraphics[width=\textwidth]{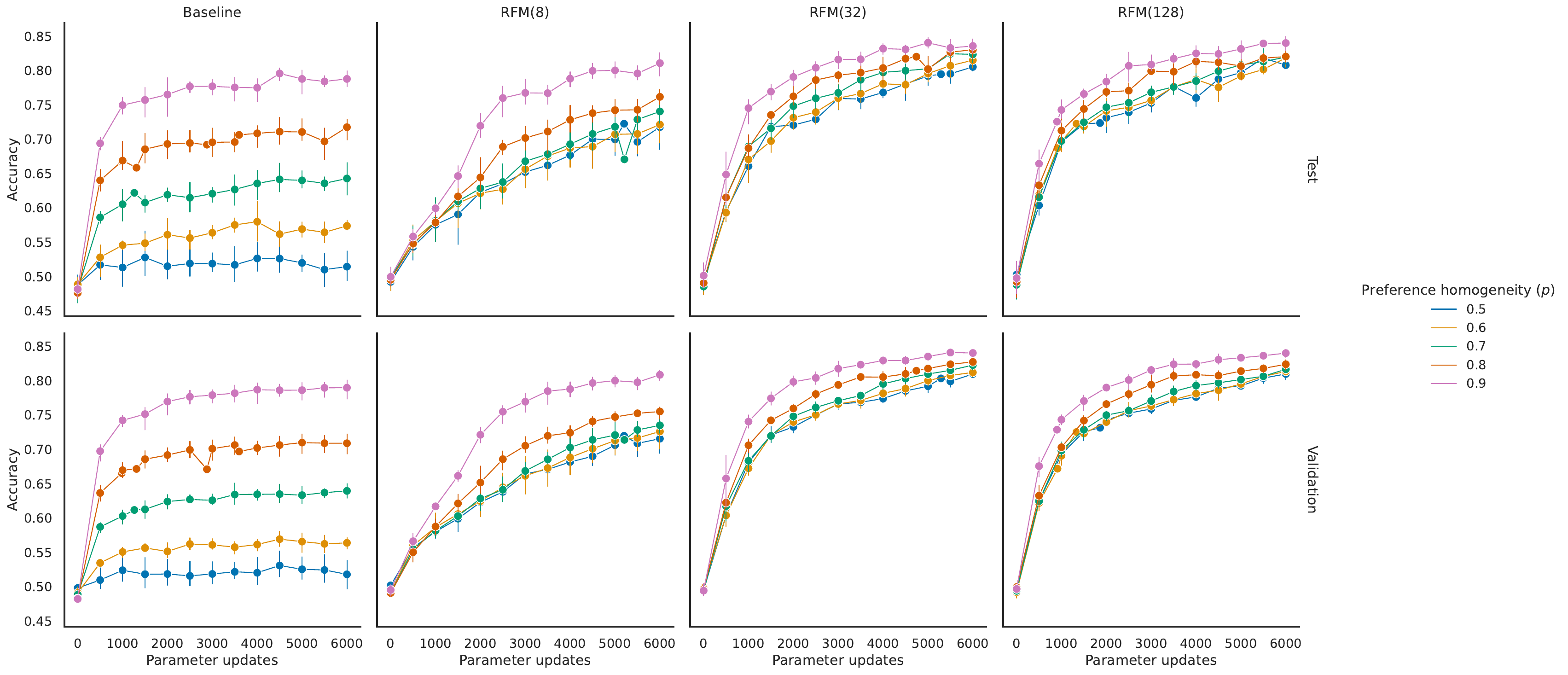}}
\caption{Validation and test accuracies as a function of update steps and preference heterogeneity $p$ during the training phase (estimate of intra-user generalisation). Error bars are $99\%$ confidence intervals over $5$ runs.}
\label{fig:bernoulli_prob_training_curves}
\end{center}
\end{figure*}

Figure~\ref{fig:experiment_3} shows the performance of the baseline and RFM when using $\n \in \{10, 30, 50, 70, 90\}$ examples for adaptation. Although these are small numbers from a learning perspective, one may ask what happens under even more stringent conditions. To answer this question, we extended the RFM($32$) results in Figure~\ref{fig:experiment_3} to include test accuracies with $\n \in \{1, 3, 5\}$. The results are shown in Table~\ref{tab:rfm_adaptation_performance}, together with RFM($32$)'s results from Figure~\ref{fig:experiment_3}.

\begin{table}[htb!]
    \caption{Accuracy in predicting the preferences of $500$ held-out users on UltraFeedback's test set after adaptation (cf. Figure~\ref{fig:empirical_analysis}). The range of values shown are $99\%$ confidence intervals over $5$ runs.}
    \label{tab:rfm_adaptation_performance}  
    \centering
    \vspace{2mm}
    \begin{tabular}{lrr} 
        \toprule
        \textbf{\n} & \textbf{Test accuracy} \\ \hline
        $1$	& $ 0.5481 \pm 0.0217 $ \\
        $3$	& $ 0.5749 \pm 0.0109 $ \\
        $5$	& $ 0.5891 \pm 0.0056 $ \\
        $10$ & $ 0.7053 \pm 0.0182 $ \\
        $30$ & $ 0.7657 \pm 0.0790 $ \\
        $50$ & $ 0.7770 \pm 0.0333 $ \\
        $70$ & $ 0.7835 \pm 0.0217 $ \\
        $90$ & $ 0.7900 \pm 0.0172 $  \\
        \bottomrule
    \end{tabular}
\end{table}

As expected, RFM's performance improves monotonically with the number of examples \n\ used for adaptation. Note that even the results with a single example remain above random chance (though significantly lower than the results with $\n \ge 10$).

We point out that worse performance with very few adaptation examples is expected, for it reflects the intrinsic difficulty of the learning problem. Given that each user's preferences are derived from a combination of 13 features, many different combinations of those 13 features may explain the preference behind a few training examples, and multiple training examples may be necessary to disambiguate. It is reassuring to see that RFM is still able to capture some of the problem structure under these extreme conditions, and that its performance monotonically increases with \n.

In Figure~\ref{fig:experiment_2} we show the effect of varying the level of homogeneity $p$ of the distribution $\D_{\H}$ from which raters and held-out users are sampled. In some real scenarios, there may be a discrepancy between the distribution used to sample raters and the real distribution underlying users. To simulate this scenario, we ran an experiment in which held-out users were sampled from a fixed distribution with $p=0.65$ while raters were sampled from distributions with varying $p$. Results are shown in Figure~\ref{fig:out_of_distribution}. The discrepancy between the rater and held-out user distributions has a negative impact on both RFM's and the baseline's performance, as expected. However, the negative impact on the baseline is much more severe, and, in contrast with RFM, it increases as $p \ra 0.5$. 

\begin{figure*}[ht]
\begin{center}
\includegraphics[width=0.4\textwidth]{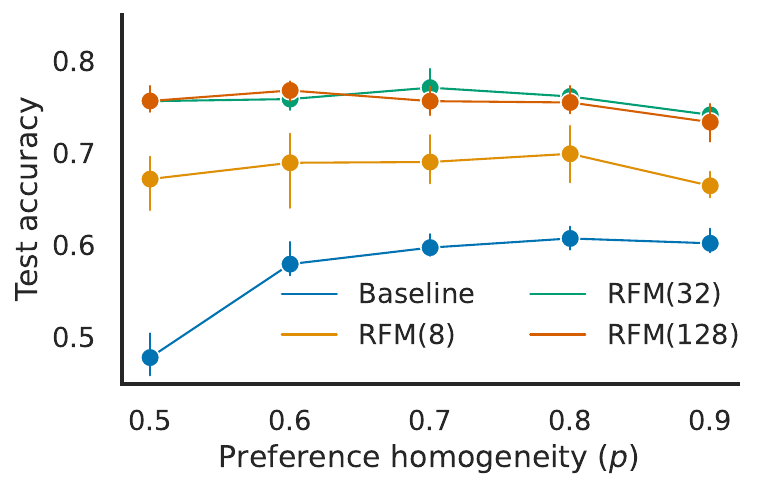}
\caption{Accuracy in predicting the preferences of $500$ held-out users on the test set after adaptation (estimate of inter-user generalisation). Training was carried out with $m=60$ raters and $\n=30$ examples were used for adaptation. The homogeneity parameter was fixed at $p=0.65$ for the $500$ held-out users, and varied for the training raters as shown on the $x$-axis. Error bars are $99\%$ confidence intervals over $5$ runs.}\label{fig:out_of_distribution}.
\end{center}
\end{figure*}

\subsection{Comparisons}
\label{seca:comparisons}

\paragraph{Comparison with the linear baseline.} The linear baseline was adapted on top of the trained baseline using gradient descent with a learning rate of $10^{-5}$ (the same used for RFM). We froze all the parameters of the baseline trained with $m=60$ raters and preference homogeneity $p=0.7$ (\textsl{c.f.} Figure~\ref{fig:empirical_analysis}), except for the last layer. Then, we replaced the last layer with $500$ linear layers, one for each held-out users $h$, and trained them using the data in the corresponding $S_h$. Note that this comes down to performing $500$ logistic regressions in parallel on top of the frozen baseline's user-agnostic features (whose dimension is $1024$). Adaptation followed the exact same protocol adopted for RFM, with $6,000$ parameter updates using a batch size of $32$ shared among all users.

As an aside, note that using the protocol above to train all $2$ billion parameters of the baseline $500$ times would be infeasible. This illustrates our point in Section~\ref{sec:reward_model_personalisation} advocating a small number $d$ of adaptable parameters.

\paragraph{Comparison with the non-linear baseline.} \ct{park2024rlhf} and \ct{zhong2024provable} propose principled (and similar) methods to adapt a reward model to individual users. Since the former is slightly more general, we compared RFM against it.

Using our terminology, we can describe \ctp{park2024rlhf} method as having a single learning phase that is intermediate between training and adaptation. If we partition the shared parameters $\vtheta = [\vtheta_1, \vtheta_2]$, the method consists in keeping $\vtheta_1$ fixed while $\vtheta_2$ is learned  together with the vectors $\w_h$ associated with users. We note that, unlike RFM, this approach requires that the vectors $\w_h$ are simultaneously learned for all users $h \in \H$ we are interested in adapting the model to.

Following \ctp{park2024rlhf} experiments, we froze the  backbone Gemma model ($\vtheta_1$), represented $\vphi$ as a multilayer perceptron (MLP) with $3$ or $5$ hidden layers containing $32$ units each ($\vtheta_2$), and learned the MLP together with the vectors $\w_h$. To make sure $\vtheta_1$ was initialised with reasonable values, we ran our training phase with $\vtheta_1$, $\vtheta_2$, and \W\ being learned together, where the rows of \W\ correspond to the raters $\hh \in \hH$ (cf. Equation~\eqref{eq:rfm_training}). 

As shown in Figure~\ref{fig:in_context_comparison}, the non-linear baselines did not perform well. We conjectured that this may be due an excessive number of parameters to be learned with $\n=10$ examples, so we re-ran the experiment using $\n=100$. Results are shown in Table~\ref{tab:comparisons} together with the other results from Figure~\ref{fig:in_context_comparison}. Although the non-linear baselines perform slightly better with larger \n, they are still outperformed by the non-adaptive and linear baselines, and considerably outperformed by RFM.

\begin{table}[htb!]
    \caption{Detailed comparison of RFM against linear and non-linear baselines (cf. Figure~\ref{fig:in_context_comparison}). The values shown are the accuracy in predicting the preferences of $500$ held-out users using $\n$ examples for adaptation, together with $99\%$ confidence intervals over $5$ runs.}
    \label{tab:comparisons}  
    \centering
    \vspace{2mm}
    \begin{tabular}{lrr} 
        \toprule
        \textbf{Model} & \textbf{\n} & \textbf{Test accuracy} \\ \hline
        Baseline & $10$ & $0.6258 \pm 0.0246$ \\
        Linear baseline	& $10$ & $0.6285 \pm 0.0176$ \\
        Non-linear \vphi\ (3 layers) & $10$ & $0.5543 \pm 0.0381$ \\
        & 100 & $0.5706 \pm 0.0207$ \\
        Non-linear \vphi\ (5 layers) & $10$ & $0.5237 \pm 0.0273$ \\
        & $100$ & $0.5620 \pm 0.0452$ \\
        RFM(32)	& $10$ & $0.7053 \pm 0.0182$ \\ 
        \bottomrule
    \end{tabular}
\end{table}

An alternative to the non-linear architecture proposed by \ct{park2024rlhf} would be to have a separate MLP per user (that is, we replace $\w_h$ with $\vtheta_h$). This architecture would probably required even more examples to be adapted, since the MLPs' parameters $\vtheta_h$ would be trained with the data in $S_h$ only.

\paragraph{Comparison with in-context methods.} We present the prompt used to evaluate the in-context capabilities of Gemini 1.5 Pro and GPT-4o in Figure~\ref{fig:prompt}. It has three main parts: (i) the system instructions, delimited by \verb|[System]| and \verb|[End of system]|; (ii) a sequence of 10 previous user ratings (whose formatting is described in the system instructions); and (iii) the prompt, first response, and second response to be assessed by the LLM. To avoid positional biases, we have the LLM assess the first and second responses both in their original order and in reversed order and average over the correctness of both LLM outputs.

For each example in UltraFeedback's test set, we sample one user $h$ uniformly at random from the set of $500$ held-out users. We then draw from the training set $10$ examples of responses previously ranked by $h$. The previously ranked responses are inserted in part (ii) of the prompt template, and the test example itself is inserted in part (iii) of the prompt template. We then compare the predicted comparison outcome against user $h$'s ranking for that test example.

We also evaluate Gemini's zero-shot agreement with the held-out users ({\em Gemini (zero-shot)}) to help assess whether adding previously ranked responses as context makes a difference in terms of performance. In that setting, we omit the paragraph starting with ``We will provide a few examples'' in part (i) of the prompt template and remove part (ii) altogether.

\begin{figure*}
\begin{lstlisting}
[System]
Please act as an impartial judge and evaluate the quality of the responses provided by two AI assistants to the user question displayed below.

A rating starts with the tag [User question] followed by the context and the tag [End of user question]. After that, we have the tag [The Start of Model A's answer] followed by the first response. The first response ends with the tag [The End of Model A's answer]. We then have the tag [The Start of Model B's answer] followed by the second response. The second response ends with the tag [The End of Model B's answer].

We will provide a few examples of previous ratings to help you understand the task. The example ratings will have the structure above followed by the tag [Verdict], the verdict ("[[A]]" if assistant A is better, and "[[B]]" if assistant B is better), and the tag [End of verdict].

We are interested in your evaluation of the last two responses. Begin your evaluation by comparing these two responses and provide a short explanation. Do not favor certain names of the assistants. Be as objective as possible. After providing your explanation, output your final verdict by strictly following this format: "[[A]]" if assistant A is better, and "[[B]]" if assistant B is better.
[End of system]

[User question]
<...>
[End of user question]

[The Start of Model A's answer]
<...>
[The End of Model A's answer]

[The Start of Model B's answer]
<...>
[The End of Model B's answer]

[Verdict]
<...>
[End of verdict]

<...>

[User question]
<...>
[End of user question]

[The Start of Model A's answer]
<...>
[The End of Model A's answer]

[The Start of Model B's answer]
<...>
[The End of Model B's answer]
\end{lstlisting}
\caption{\label{fig:prompt} The prompt template used for in-context evaluation. When evaluating Gemini's zero-shot capabilities we omit the paragraph starting with ``We will provide a few examples'' in the prompt.}
\end{figure*}

The fact that the LLMs' performance essentially reduces to chance in Figure~\ref{fig:in_context_comparison} is somewhat surprising. We hypothesise the explanation is twofold: 1) despite the information being available in the prompt, the LLMs do not take advantage of the previous ratings to inform their decisions and instead revert to making a ``judgement call'' based on their existing alignment; and 2) this alignment is somewhat orthogonal to the features $\{\phi_i\}_{i=1}^{13}$ we used (described in Appendix~\ref{seca:empirical_analysis}). Preliminary experiments with the features provided with the UltraFeedback dataset corroborate this hypothesis. The UltraFeedback dataset comes with four features computed using \ctp{openai2024gpt4} GPT-4:  {\em helpfulness}, {\em honesty}, {\em instruction-following}, and {\em truthfulness}. Each example in the dataset has a score between $1$ and $5$ associated with each feature. We defined a user whose preferences were fully determined by the {\em helpfulness} features and reran the experiment described above. Results are in Table~\ref{tab:in_context_comparison} and demonstrate that 1) just like with our main in-context comparison experiment shown in Figure~\ref{fig:in_context_comparison}, the effect of 10-shot in-context prompting is negligible in comparison to zero-shot prompting, and 2) the LLMs perform considerably above chance when trying to predict preferences induced by a feature computed itself by an LLM.

\begin{table}[htb!]
    \caption{Accuracy in predicting the preferences of a held-out user induced by the feature {\em helpfulness} provided with the UltraFeedback dataset. The three models used $\n=10$ examples for adaptation.}
    \label{tab:in_context_comparison}  
    \centering
    \vspace{2mm}
    \begin{tabular}{lr} 
        \toprule
        Model                                & Test accuracy \\
        \midrule
        Gemini 1.5 Pro (zero-shot)	         & 0.6341        \\
        Gemini 1.5 Pro (in-context, 10-shot) & 0.6199        \\
        GPT-4o (in-context, 10-shot)	     & 0.6392        \\
        \bottomrule
    \end{tabular}
\end{table}

We considered using the UltraFeedback features in our experiments, but for a more rigorous study we needed features that could be computed unequivocally and efficiently, and thus easily extrapolated beyond the UltraFeedback dataset. This was essential for the experiments with best-of-$n$ shown in Figure~\ref{fig:bon}, for example, in which we had to compute the features associated with examples $(x, y, y')$ not in the UltraFeedback dataset.

\paragraph{Comparison with VPL.} \ctp{poddar2024personalizing} evaluation protocol for VPL allows to measure intra-user generalization, but not inter-user generalization. This is because the preferences used for evaluation are obtained from the same four raters that were used to train the model. Those raters' preferences are derived from the four UltraFeedback features mentioned above: {\em helpfulness}, {\em honesty}, {\em instruction-following}, and {\em truthfulness}, with each rater focusing exclusively on a single feature. In terms of evaluating intra-user generalization, VPL relies on an episodic protocol: for every rater $\hat{h}_i$ and for every tuple $(x, y, y', z)$ in the rater's test set, the authors simulate a new adaptation problem by drawing between two to eight other tuples sampled at random from the rater's test set to provide as context for the inference network to make a prediction. This means that the rater's identity $\hat{h}_i$ is never explicitly revealed to the model, but only contextually through the two to eight other tuples. In contrast, our evaluation methodology is more akin to transfer learning evaluation: for each rater, we adapt RFM on a held-out set of examples. The \emph{same} held-out set is used for all test examples, that is, we learn \w\ once using the held-out data and then use it to process all the test tuples $(x, y, y', z)$.

The VPL authors provide the exact data used for evaluation on UltraFeedback.\footnote{\href{https://github.com/WEIRDLabUW/vpl\_llm?tab=readme-ov-file\#data-and-pretrained-models}{github.com/WEIRDLabUW/vpl\_llm?tab=readme-ov-file\#data-and-pretrained-models}} For simplicity, we adapt our evaluation protocol so as to get an unbiased estimate of the episodic metric used to evaluate VPL. We hold out 250 examples from VPL's training set to sample adaptation sets. After RFM has been trained on the four UltraFeedback raters derived from UltraFeedback features, we loop over raters, number $\n \in \{2, 3, 4, 5, 6, 7, 8\}$ of examples used for adaptation, and 5 random seeds, each time drawing $\n$ examples from the held-out set for that rater, discarding the pre-trained set of weights $\w$ for that rater, and learning a new $\w$ on the sampled adaptation set. We then aggregate the accuracies measured on the test set across adaptation set sizes, raters, and random seeds (see Figure~\ref{fig:vpl_detailed} for non-aggregated results). As a result, the examples used for adaptation for each individual test example do not correspond exactly to the ones used by VPL (and are shared across test examples), but they are equally disjoint from the training set and their distribution is i.i.d.\ with respect to the distribution VPL samples from.

We obtain an averaged test accuracy of $61.61\%$ against VPL's $61.49\%$. Given the methodological caveats outlined above, the conclusion we draw is that, despite its simplicity, RFM achieves an intra-user generalization performance comparable with VPL's. We again highlight that this notion of generalisation is distinct from inter-user generalisation, which is the main one we are targeting in this work, but for which VPL does not provide metrics to compare against.

\begin{figure*}[ht]
\begin{center}
\centerline{\includegraphics[width=\textwidth]{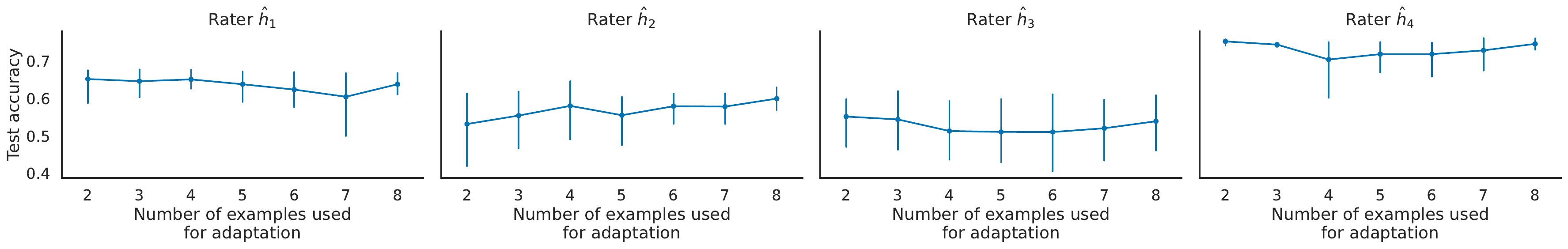}}
\caption{Test accuracies for RFM(32) on the UltraFeedback-derived UF-P-4 task introduced by \ct{poddar2024personalizing}, broken down by rater and number of examples used for adaptation. The error bars represent 99\% confidence intervals over 5 randomly-sampled adaptation sets.
}
\label{fig:vpl_detailed}
\end{center}
\end{figure*}

\subsection{Modelling groups of real users}
\label{seca:rmr}

In the experiments described in Appendices~\ref{seca:empirical_analysis} and~\ref{seca:comparisons} we had control over the definition of the training raters and held-out users. In the experiments with reward models acting as users we removed this assumption. As mentioned in Section~\ref{sec:experiments}, we performed our experiments with two datasets: UltraFeedback and PersonalLLM. We now describe each in turn.

For the experiments with UltraFeedback, shown in Figure~\ref{fig:ultrafeedback}, we used the following $8$ publicly-available reward models to emulate raters and users:
\begin{itemize}
    \item OpenAssistant\_reward-model-deberta-v3-large-v2,
    \item weqweasdas\_RM-Mistral-7B~\citep{dong2023raft,xiong2024iterative},
    \item OpenAssistant\_oasst-rm-2.1-pythia-1.4b-epoch-2.5,
    \item Ray2333\_GRM-Gemma-2B-sftreg~\citep{yang2024regularizing},
    \item Ray2333\_reward-model-Mistral-7B-instruct-Unified-Feedback~\citep{yang2024regularizing},
    \item weqweasdas\_RM-Gemma-7B~\citep{dong2023raft},
    \item internlm\_internlm2-7b-reward~\citep{cai2024internlm2},
    \item openbmb\_Eurus-RM-7b~\citep{yuan2024advancing}.
\end{itemize}
All the models above are available on the Hugging Face website.\footnote{\href{https://huggingface.co}{huggingface.co}.} For each reward model $r_k$ and each example $(x_i, y_i, y'_i)$ in the training and test sets, we defined $z_i^k = \ind\{r_k(x_i, y_i) > r_k(x_i, y'_i)\}$. \footnote{The resulting data is available at \href{https://huggingface.co/datasets/google/rfm-rm-as-user-dataset}{huggingface.co/datasets/google/rfm-rm-as-user-dataset}.} We performed leave-one-old cross validation using the $8$ resulting raters, as explained in Section~\ref{sec:experiments}. As before, training was carried out for $6,000$ parameter updates with a batch size of $32$. We performed $2$ training runs followed by $5$ adaptation runs each, totalling $10$ runs. Figure~\ref{fig:adaptation_rmr} shows the detailed results obtained in our experiments with the UltraFeedback dataset (Figure~\ref{fig:rmr} is a slice of this figure with the number of adaptation examples fixed at $\n=50$).

\begin{figure*}[ht]
\begin{center}
\centerline{\includegraphics[width=\textwidth]{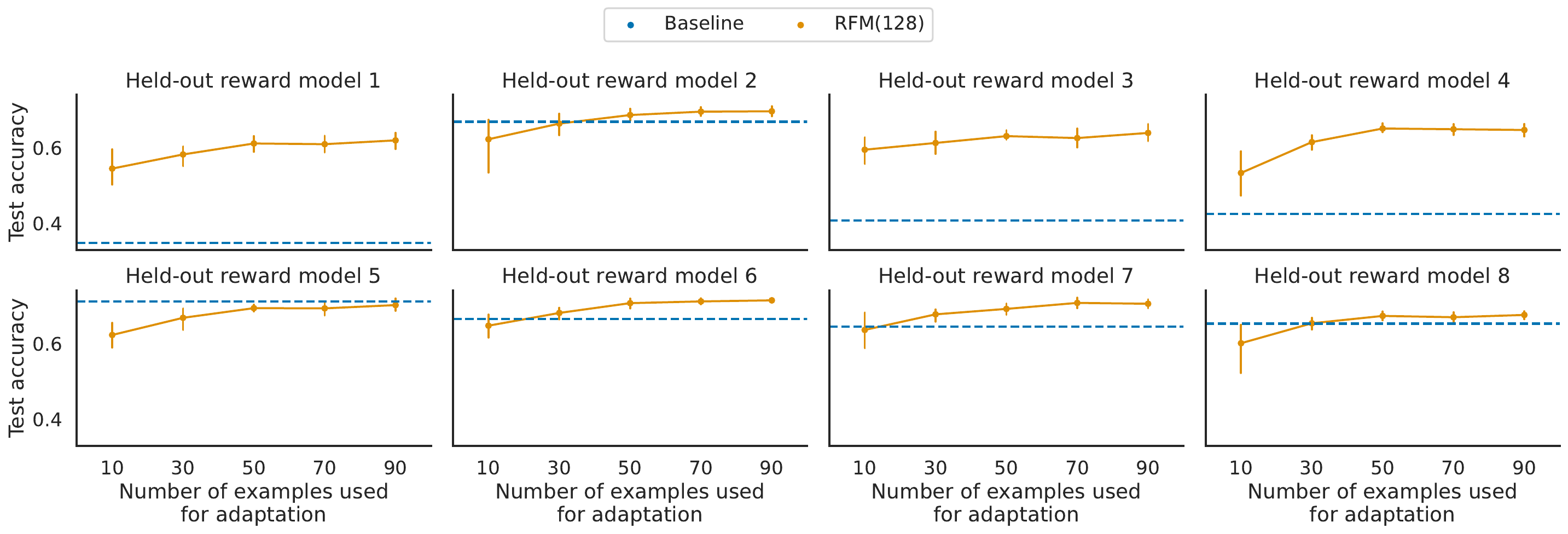}}
\caption{Accuracy in predicting the preferences of held-out ``reward-models users'' on test set after adaptation (estimate of inter-user generalisation). Error bars are $99\%$ confidence intervals over $10$ runs ($5$ adaptation runs on top of $2$ training runs).}
\label{fig:adaptation_rmr}
\end{center}
\end{figure*}

We now describe our experiments with the PersonalLLM dataset, whose results are shown in Figure~\ref{fig:personal_llm}. Each context $x$ in this dataset has 8 responses, and each response has been scored by 10 reward models. For our experiments, we picked the first two responses to each context $x$ and used them as our pair $(y, y')$. We then proceeded exactly as in the experiments with the UltraFeedback dataset, except that we performed $5$ runs each involving training followed by adaptation.

\section{Expanded discussion of related work}
\label{seca:related_work}

In this section we present a more detailed discussion on the works we consider to be more directly relevant to ours.

\ct{shenfeld2025language} concurrently propose the same architecture as RFM, but under a different name: ``PReF'', for \emph{personalisation via reward factorization}. They also approach the subject from a different angle: while we study the learning problem from a higher level of abstraction, and then specialise our results to RFM, they focus on how to exploit RFM's simple architecture to improve several aspects of the training pipeline. Among these, two contributions stand out: a stable initialisation of the model through singular value decomposition and an efficient way of selecting examples for adaptation via active learning. Their theoretical results regard the latter. \ct{shenfeld2025language} also present a thorough empirical evaluation of their method, including an ablation study of their proposed improvements, comparisons with \ctp{chen2024modeling} and \ctp{poddar2024personalizing} approaches (discussed below), and a small (but interesting) experiment involving human users.

\ct{bose2025lore} also concurrently propose an architecture very similar to RFM, this time under the name \emph{LoRe} (for ``low-rank reward modelling''). The LoRe architecture closely resembles RFM's, in that both express individual reward functions as linear combinations of learned reward features (or ``reward basis'' in LoRe's terminology). A small, but potentially relevant difference is that their vector of coefficients \w\ is a distribution (that is, $\sum_{i=1}^d w_i =1$ and $0 \le w_i \le 1$ for all $i = 1, 2, ..., d$). This slightly restricts the expressiveness of the model. \ct{bose2025lore} do not present theoretical results, but their empirical evaluation of the proposed architecture is extensive. They make some of the same comparisons as us---namely: baseline, linear baseline, and VPL---and also compare the proposed architecture with that of \ct{chen2024modeling}, discussed below.

\ct{zhong2024provable} and \ct{park2024rlhf} also propose methods that are reminiscent of RFM. Both papers put more emphasis on the theoretical analysis than on empirical results. \ca{zhong2024provable} do not present experiments, while \ca{park2024rlhf} only present simple empirical evaluations. We briefly describe one instantiation of \ca{park2024rlhf}'s approach in Section~\ref{sec:experiments} and Appendix~\ref{seca:comparisons}, and also compare it with RFM.

\ctp{go2024compositional} \emph{compositional preference model} (CPM) expresses the learned reward function as a linear combination of features computed from the context and response. However, unlike RFM---which learns features using~\eqref{eq:loss}---, the CPM approach uses handcrafted ordinal features computed by LLMs using pre-specified prompts.  

\ct{chen2024modeling} propose to replace the Bradley-Terry model with \ctp{coombs1950psychological} 
\emph{ideal point model}. This model posits that people make assessments of a given object based on the distance between the representation of the object in a latent space and an ``ideal reference point'' in the same space. While the inner product between RFM's $\rfm(x, y)$ and $\w_h$ does not strictly qualify as a distance function, we can think of $\w_h$ as being akin to an ``ideal point'' for user $h$ against which $\rfm(x, y)$ is compared. \ct{chen2024modeling} propose two different instantiations of the ideal point model---Model A and Model B---which differ from RFM in a few aspects. Like RFM's $\rfm(x, y)$, Model A computes features from the context and response jointly, but the ideal point is constructed as a convex combination of ``prototypical'' ideal points and the comparison is made using the Euclidean distance. Model B defines an ideal point that depends on the context only, also as a convex combination of prototypical ideal points. Unlike RFM, Model B computes response features separately and compares them to the context ideal point using the cosine similarity. Both Model A and Model B constrain the user-specific parameters to lie on a simplex whose number of vertices becomes an extra hyper-parameter (in addition to the dimension $d$ of the ideal points). This presents an additional optimisation challenge and, as the authors themselves point out, does not generalise to users whose ideal point would fall outside of the convex hull. 

\ctp{poddar2024personalizing} \emph{variational preference learning} (VPL) works by encoding tuples $\{x_i, y_i, y'_i, z_i\}_{i=1}^m$ associated with a user $h$ into a latent variable, and then conditioning the reward model (and policy) on the latent variable. \ct{zhao2024group} cast the problem of personalised preference learning as a few-shot learning problem and tackle it with in-context meta-learning. Their \emph{group preference optimization} (GPO) approach trains a transformer to predict target preferences for a given homogeneous group of users from a context set of preferences from the same group. Like RFM, both works learn from a context set of preferences from a new user (or group of users). Unlike RFM---which learns through optimizing $\w_h$---both works amortize the process using a neural network (namely, VPL's encoder and GPO's transformer).

\ct{li2024personalized} propose a personalised RLHF (P-RLHF) framework in which a learnable user model computes an embedding for each user as the concatenation of an implicit embedding (that depends on the user's unique identifier) and an optional explicit embedding (that depends on textual information about the user). The user embedding is then used to condition a base LLM through soft prompting. New users are accommodated by using a generic implicit embedding and computing an explicit embedding based on the new user's textual information (if available).

\ct{dumoulin2024density} present results on ``rater misspecification'', in which a model trained on the preferences of a single rater which favors two characteristics (\textls{e.g.}, short generations or long generations, but not middling generations) behaves very differently from a model trained on the preferences of two raters, each of which favor one characteristic (short generations for one rater, long generations for the other). The authors frame the issue from the perspective of a misspecified generative process for pairwise preferences and show that under correct assumptions (namely, the preference dataset is a mixture of individual raters' preferences) it is possible to accurately capture multiple raters' preferences. In practice, the solution the authors propose (explicitly introducing as many models as there are raters) works in synthetic settings but was never shown to work at scale on larger problems. It assumes that one knows the number of raters in the preference dataset, but it does not assume that each individual preference tuple is annotated with the rater ID. The solution also does not consider how one would accommodate new users.

\ct{chakraborty2024maxmin} propose to learn a user-specific reward function by using an expectation-maximisation algorithm to define a mixture of preference distributions. To learn a single policy that better represent diverse human preferences, the authors propose a \emph{MaxMin} alignment objective inspired by the Egalitarian principle in social choice theory. 

\section{Broader impact}
\label{seca:broader_impact}

Beyond an improved user experience, adaptive reward models have great potential for positive impact. For example, they may allow for the inclusion of minority opinions and the representation of diverse viewpoints in model outputs. However, as with most technological advances, this does not come without risks. If not implemented with ethical implications in mind, the specialisation of LLMs to user preferences may result in models behaving in undesirable ways. It may also reinforce existing points of view through sycophantic behaviour, contributing to the polarisation of opinions and the creation of ``echo chambers''. 

There are ways to anticipate and mitigate these undesirable outcomes on the methodological, technical, and societal fronts. Methodologically, it is important to define rubrics for data collection and curation that clearly distinguish between genuine disagreement on subjective matters and denial of facts or deviations from prevalent ethical norms and scientific consensus. From a technical standpoint, the fact that a user-specialised reward model will be used to modulate the outputs of an LLM renders the LLM itself a safeguard mechanism. For example, the best-of-$n$ approach used in this paper re-ranks the LLM's outputs based on the adapted reward model.  Consequently, if the LLM generates factually correct and ethical text, the re-ranking primarily prioritises or de-prioritises aspects of a subjective nature---like the style of the prose, for instance. Finally, on a societal level, the personalisation of LLMs should be part of, and would directly benefit from, the wider ongoing discussion regarding the deployment of this new technology.

On the positive side, adaptive reward models and the consequent personalisation of LLMs offer a great opportunity for the inclusion of diverse points of view into model outputs. Current reward models reflect the average preferences of the target population, which excludes under-represented or ``outlier'' preferences. In using specialised reward models to adapt LLM outputs to individuals, we create systems that can more accurately and reliably reflect the perspectives of users who hold minority views, potentially empowering them, together with everyone else, to participate more fully in social debate. 

\newpage
\section*{NeurIPS Paper Checklist}

\begin{enumerate}

\item {\bf Claims}
    \item[] Question: Do the main claims made in the abstract and introduction accurately reflect the paper's contributions and scope?
    \item[] Answer: \answerYes{}
    \item[] Justification: All claims in the abstract are supported by experiments (Section~\ref{sec:experiments} and Appendix~\ref{seca:experiment_details}) or proofs (Appendix~\ref{seca:proofs}).
    \item[] Guidelines:
    \begin{itemize}
        \item The answer NA means that the abstract and introduction do not include the claims made in the paper.
        \item The abstract and/or introduction should clearly state the claims made, including the contributions made in the paper and important assumptions and limitations. A No or NA answer to this question will not be perceived well by the reviewers. 
        \item The claims made should match theoretical and experimental results, and reflect how much the results can be expected to generalize to other settings. 
        \item It is fine to include aspirational goals as motivation as long as it is clear that these goals are not attained by the paper. 
    \end{itemize}

\item {\bf Limitations}
    \item[] Question: Does the paper discuss the limitations of the work performed by the authors?
    \item[] Answer: \answerYes{}
    \item[] Justification: There is a paragraph in Section~\ref{sec:rfm} (titled {\bf Limitations}) in which the limitations of the proposed model are explicitly discussed. Paragraph {\bf Training} of the same section discusses RFM's requirement to select a proper number of dimensions $d$ for $\vphi_{\vtheta}$.
    \item[] Guidelines:
    \begin{itemize}
        \item The answer NA means that the paper has no limitation while the answer No means that the paper has limitations, but those are not discussed in the paper. 
        \item The authors are encouraged to create a separate "Limitations" section in their paper.
        \item The paper should point out any strong assumptions and how robust the results are to violations of these assumptions (e.g., independence assumptions, noiseless settings, model well-specification, asymptotic approximations only holding locally). The authors should reflect on how these assumptions might be violated in practice and what the implications would be.
        \item The authors should reflect on the scope of the claims made, e.g., if the approach was only tested on a few datasets or with a few runs. In general, empirical results often depend on implicit assumptions, which should be articulated.
        \item The authors should reflect on the factors that influence the performance of the approach. For example, a facial recognition algorithm may perform poorly when image resolution is low or images are taken in low lighting. Or a speech-to-text system might not be used reliably to provide closed captions for online lectures because it fails to handle technical jargon.
        \item The authors should discuss the computational efficiency of the proposed algorithms and how they scale with dataset size.
        \item If applicable, the authors should discuss possible limitations of their approach to address problems of privacy and fairness.
        \item While the authors might fear that complete honesty about limitations might be used by reviewers as grounds for rejection, a worse outcome might be that reviewers discover limitations that aren't acknowledged in the paper. The authors should use their best judgment and recognize that individual actions in favor of transparency play an important role in developing norms that preserve the integrity of the community. Reviewers will be specifically instructed to not penalize honesty concerning limitations.
    \end{itemize}

\item {\bf Theory assumptions and proofs}
    \item[] Question: For each theoretical result, does the paper provide the full set of assumptions and a complete (and correct) proof?
    \item[] Answer: \answerYes{}
    \item[] Justification: Propositions~\ref{thm:bound_c}, \ref{thm:bound}, \ref{thm:bound_c_alpha}, and \ref{thm:bound_rademacher} are proved in Appendix~\ref{seca:proofs}. All the assumptions underlying these results are clearly stated in the text.
    \item[] Guidelines:
    \begin{itemize}
        \item The answer NA means that the paper does not include theoretical results. 
        \item All the theorems, formulas, and proofs in the paper should be numbered and cross-referenced.
        \item All assumptions should be clearly stated or referenced in the statement of any theorems.
        \item The proofs can either appear in the main paper or the supplemental material, but if they appear in the supplemental material, the authors are encouraged to provide a short proof sketch to provide intuition. 
        \item Inversely, any informal proof provided in the core of the paper should be complemented by formal proofs provided in appendix or supplemental material.
        \item Theorems and Lemmas that the proof relies upon should be properly referenced. 
    \end{itemize}

    \item {\bf Experimental result reproducibility}
    \item[] Question: Does the paper fully disclose all the information needed to reproduce the main experimental results of the paper to the extent that it affects the main claims and/or conclusions of the paper (regardless of whether the code and data are provided or not)?
    \item[] Answer: \answerYes{}
    \item[] Justification: Complete experiment details are provided in Section~\ref{sec:experiments} and Appendix~\ref{seca:experiment_details}.
    \item[] Guidelines:
    \begin{itemize}
        \item The answer NA means that the paper does not include experiments.
        \item If the paper includes experiments, a No answer to this question will not be perceived well by the reviewers: Making the paper reproducible is important, regardless of whether the code and data are provided or not.
        \item If the contribution is a dataset and/or model, the authors should describe the steps taken to make their results reproducible or verifiable. 
        \item Depending on the contribution, reproducibility can be accomplished in various ways. For example, if the contribution is a novel architecture, describing the architecture fully might suffice, or if the contribution is a specific model and empirical evaluation, it may be necessary to either make it possible for others to replicate the model with the same dataset, or provide access to the model. In general. releasing code and data is often one good way to accomplish this, but reproducibility can also be provided via detailed instructions for how to replicate the results, access to a hosted model (e.g., in the case of a large language model), releasing of a model checkpoint, or other means that are appropriate to the research performed.
        \item While NeurIPS does not require releasing code, the conference does require all submissions to provide some reasonable avenue for reproducibility, which may depend on the nature of the contribution. For example
        \begin{enumerate}
            \item If the contribution is primarily a new algorithm, the paper should make it clear how to reproduce that algorithm.
            \item If the contribution is primarily a new model architecture, the paper should describe the architecture clearly and fully.
            \item If the contribution is a new model (e.g., a large language model), then there should either be a way to access this model for reproducing the results or a way to reproduce the model (e.g., with an open-source dataset or instructions for how to construct the dataset).
            \item We recognize that reproducibility may be tricky in some cases, in which case authors are welcome to describe the particular way they provide for reproducibility. In the case of closed-source models, it may be that access to the model is limited in some way (e.g., to registered users), but it should be possible for other researchers to have some path to reproducing or verifying the results.
        \end{enumerate}
    \end{itemize}
\newpage 

\item {\bf Open access to data and code}
    \item[] Question: Does the paper provide open access to the data and code, with sufficient instructions to faithfully reproduce the main experimental results, as described in supplemental material?
    \item[] Answer: \answerNo{}
    \item[] Justification: While the Gemma 1.1 2B~\cp{gdm2024gemma} model used in our work is open-sourced, the specific training scripts used for our data preparation and model training are not included in this release because they are deeply integrated with the proprietary infrastructure used to carry out experiments, making them unsuitable for public release.

    However, we have taken the following steps to ensure transparency and facilitate the understanding and potential reproduction of our work:
    \begin{itemize}
        \item Data sources and preparation: we use two publicly available datasets, UltraFeedback~\cp{cui2023ultrafeedback} and PersonalLLM~\cp{zollo2025personalllm}, and provide a detailed description of how we generated synthetic users' preferences to function as labels (Section~\ref{sec:experiments} and Appendix~\ref{seca:experiment_details}). 
        \item For the experiments using reward models as users, we used publicly available models and provide the full references in Appendix~\ref{seca:rmr}. We also made the resulting labelled data publicly available on the Hugging Face website, as detailed in Appendix~\ref{seca:rmr}.
        \item Experimental setup: We have provided comprehensive details of all experimental setups, including model configurations, hyperparameters, and the training procedures in Section~\ref{sec:experiments} and Appendix~\ref{seca:experiment_details}.
    \end{itemize}
    
        While the exact training scripts cannot be shared due to their infrastructural dependencies, the provided information on the open-source model (Gemma 2B), detailed dataset descriptions, and thorough explanation of the training configurations should allow the research community to understand our methodology and replicate the core findings.
    \item[] Guidelines:
    \begin{itemize}
        \item The answer NA means that paper does not include experiments requiring code.
        \item Please see the NeurIPS code and data submission guidelines (\url{https://nips.cc/public/guides/CodeSubmissionPolicy}) for more details.
        \item While we encourage the release of code and data, we understand that this might not be possible, so “No” is an acceptable answer. Papers cannot be rejected simply for not including code, unless this is central to the contribution (e.g., for a new open-source benchmark).
        \item The instructions should contain the exact command and environment needed to run to reproduce the results. See the NeurIPS code and data submission guidelines (\url{https://nips.cc/public/guides/CodeSubmissionPolicy}) for more details.
        \item The authors should provide instructions on data access and preparation, including how to access the raw data, preprocessed data, intermediate data, and generated data, etc.
        \item The authors should provide scripts to reproduce all experimental results for the new proposed method and baselines. If only a subset of experiments are reproducible, they should state which ones are omitted from the script and why.
        \item At submission time, to preserve anonymity, the authors should release anonymized versions (if applicable).
        \item Providing as much information as possible in supplemental material (appended to the paper) is recommended, but including URLs to data and code is permitted.
    \end{itemize}

\item {\bf Experimental setting/details}
    \item[] Question: Does the paper specify all the training and test details (e.g., data splits, hyperparameters, how they were chosen, type of optimizer, etc.) necessary to understand the results?
    \item[] Answer: \answerYes{}
    \item[] Justification: Section~\ref{sec:experiments} lays out the datasets used for the experiment, the train/test split, and how synthetic users are constructed and their preferences are sampled. Further details are provided in Appendix~\ref{seca:experiment_details}.
    \item[] Guidelines:
    \begin{itemize}
        \item The answer NA means that the paper does not include experiments.
        \item The experimental setting should be presented in the core of the paper to a level of detail that is necessary to appreciate the results and make sense of them.
        \item The full details can be provided either with the code, in appendix, or as supplemental material.
    \end{itemize}

\item {\bf Experiment statistical significance}
    \item[] Question: Does the paper report error bars suitably and correctly defined or other appropriate information about the statistical significance of the experiments?
    \item[] Answer: \answerYes{}
    \item[] Justification: Unless otherwise noted, test accuracies are reported as the average over $5$ independent runs (that is, with $5$ different seeds) and are accompanied by a 99\% confidence interval.
    \item[] Guidelines:
    \begin{itemize}
        \item The answer NA means that the paper does not include experiments.
        \item The authors should answer "Yes" if the results are accompanied by error bars, confidence intervals, or statistical significance tests, at least for the experiments that support the main claims of the paper.
        \item The factors of variability that the error bars are capturing should be clearly stated (for example, train/test split, initialization, random drawing of some parameter, or overall run with given experimental conditions).
        \item The method for calculating the error bars should be explained (closed form formula, call to a library function, bootstrap, etc.)
        \item The assumptions made should be given (e.g., Normally distributed errors).
        \item It should be clear whether the error bar is the standard deviation or the standard error of the mean.
        \item It is OK to report 1-sigma error bars, but one should state it. The authors should preferably report a 2-sigma error bar than state that they have a 96\% CI, if the classifier of Normality of errors is not verified.
        \item For asymmetric distributions, the authors should be careful not to show in tables or figures symmetric error bars that would yield results that are out of range (e.g. negative error rates).
        \item If error bars are reported in tables or plots, The authors should explain in the text how they were calculated and reference the corresponding figures or tables in the text.
    \end{itemize}

\item {\bf Experiments compute resources}
    \item[] Question: For each experiment, does the paper provide sufficient information on the computer resources (type of compute workers, memory, time of execution) needed to reproduce the experiments?
    \item[] Answer: \answerNo{}
    \item[] Justification: We focus on the statistical (rather than computational) properties of the proposed approach. That is, both in our theoretical results and in our experiments we are mostly concerned with the methods' sample complexity. All the techniques we use have well-understood demands in terms of compute and memory.
    \item[] Guidelines:
    \begin{itemize}
        \item The answer NA means that the paper does not include experiments.
        \item The paper should indicate the type of compute workers CPU or GPU, internal cluster, or cloud provider, including relevant memory and storage.
        \item The paper should provide the amount of compute required for each of the individual experimental runs as well as estimate the total compute. 
        \item The paper should disclose whether the full research project required more compute than the experiments reported in the paper (e.g., preliminary or failed experiments that didn't make it into the paper). 
    \end{itemize}
    
\newpage
\item {\bf Code of ethics}
    \item[] Question: Does the research conducted in the paper conform, in every respect, with the NeurIPS Code of Ethics \url{https://neurips.cc/public/EthicsGuidelines}?
    \item[] Answer: \answerYes{}
    \item[] Justification: The research conducted in the paper conforms with the NeurIPS Code of Ethics.
    \item[] Guidelines:
    \begin{itemize}
        \item The answer NA means that the authors have not reviewed the NeurIPS Code of Ethics.
        \item If the authors answer No, they should explain the special circumstances that require a deviation from the Code of Ethics.
        \item The authors should make sure to preserve anonymity (e.g., if there is a special consideration due to laws or regulations in their jurisdiction).
    \end{itemize}

\item {\bf Broader impacts}
    \item[] Question: Does the paper discuss both potential positive societal impacts and negative societal impacts of the work performed?
    \item[] Answer: \answerYes{}
    \item[] Justification: We have a thorough discussion on the potential broader impact of this work in Appendix~\ref{seca:broader_impact}.
    \item[] Guidelines:
    \begin{itemize}
        \item The answer NA means that there is no societal impact of the work performed.
        \item If the authors answer NA or No, they should explain why their work has no societal impact or why the paper does not address societal impact.
        \item Examples of negative societal impacts include potential malicious or unintended uses (e.g., disinformation, generating fake profiles, surveillance), fairness considerations (e.g., deployment of technologies that could make decisions that unfairly impact specific groups), privacy considerations, and security considerations.
        \item The conference expects that many papers will be foundational research and not tied to particular applications, let alone deployments. However, if there is a direct path to any negative applications, the authors should point it out. For example, it is legitimate to point out that an improvement in the quality of generative models could be used to generate deepfakes for disinformation. On the other hand, it is not needed to point out that a generic algorithm for optimizing neural networks could enable people to train models that generate Deepfakes faster.
        \item The authors should consider possible harms that could arise when the technology is being used as intended and functioning correctly, harms that could arise when the technology is being used as intended but gives incorrect results, and harms following from (intentional or unintentional) misuse of the technology.
        \item If there are negative societal impacts, the authors could also discuss possible mitigation strategies (e.g., gated release of models, providing defenses in addition to attacks, mechanisms for monitoring misuse, mechanisms to monitor how a system learns from feedback over time, improving the efficiency and accessibility of ML).
    \end{itemize}
    
\item {\bf Safeguards}
    \item[] Question: Does the paper describe safeguards that have been put in place for responsible release of data or models that have a high risk for misuse (e.g., pretrained language models, image generators, or scraped datasets)?
    \item[] Answer: \answerNA{}
    \item[] Justification: The paper poses no such risks.
    \item[] Guidelines:
    \begin{itemize}
        \item The answer NA means that the paper poses no such risks.
        \item Released models that have a high risk for misuse or dual-use should be released with necessary safeguards to allow for controlled use of the model, for example by requiring that users adhere to usage guidelines or restrictions to access the model or implementing safety filters. 
        \item Datasets that have been scraped from the Internet could pose safety risks. The authors should describe how they avoided releasing unsafe images.
        \item We recognize that providing effective safeguards is challenging, and many papers do not require this, but we encourage authors to take this into account and make a best faith effort.
    \end{itemize}

\item {\bf Licenses for existing assets}
    \item[] Question: Are the creators or original owners of assets (e.g., code, data, models), used in the paper, properly credited and are the license and terms of use explicitly mentioned and properly respected?
    \item[] Answer: \answerYes{}
    \item[] Justification: The paper properly credits the creators of all the assets used: Gemma 1.1 2B~\cp{gdm2024gemma}, Gemma 2 9B~\cp{gdm2024gemma2} , Gemma 2 27B~\cp{gdm2024gemma2}, the UltraFeedback dataset~\cp{cui2023ultrafeedback}, the PersonalLLM dataset~\cp{zollo2025personalllm}, the 8 publicly available reward models (Appendix~\ref{seca:rmr}), Gemini 1.5 Pro~\cp{geminiteam2023gemini}, and GPT-4o~\cp{openai2024gpt4ocard}. All assets were used in accordance with their respective licenses. 
    \item[] Guidelines:
    \begin{itemize}
        \item The answer NA means that the paper does not use existing assets.
        \item The authors should cite the original paper that produced the code package or dataset.
        \item The authors should state which version of the asset is used and, if possible, include a URL.
        \item The name of the license (e.g., CC-BY 4.0) should be included for each asset.
        \item For scraped data from a particular source (e.g., website), the copyright and terms of service of that source should be provided.
        \item If assets are released, the license, copyright information, and terms of use in the package should be provided. For popular datasets, \url{paperswithcode.com/datasets} has curated licenses for some datasets. Their licensing guide can help determine the license of a dataset.
        \item For existing datasets that are re-packaged, both the original license and the license of the derived asset (if it has changed) should be provided.
        \item If this information is not available online, the authors are encouraged to reach out to the asset's creators.
    \end{itemize}

\item {\bf New assets}
    \item[] Question: Are new assets introduced in the paper well documented and is the documentation provided alongside the assets?
    \item[] Answer: \answerYes{}
    \item[] Justification: As detailed in Appendix~\ref{seca:rmr}, we released a dataset with the examples in the UltraFeedback dataset scored by $8$ publicly available reward models. The data has been made available on the Hugging Face website with a detailed accompanying documentation.
    \item[] Guidelines:
    \begin{itemize}
        \item The answer NA means that the paper does not release new assets.
        \item Researchers should communicate the details of the dataset/code/model as part of their submissions via structured templates. This includes details about training, license, limitations, etc. 
        \item The paper should discuss whether and how consent was obtained from people whose asset is used.
        \item At submission time, remember to anonymize your assets (if applicable). You can either create an anonymized URL or include an anonymized zip file.
    \end{itemize}

\item {\bf Crowdsourcing and research with human subjects}
    \item[] Question: For crowdsourcing experiments and research with human subjects, does the paper include the full text of instructions given to participants and screenshots, if applicable, as well as details about compensation (if any)? 
    \item[] Answer: \answerNA{}
    \item[] Justification: The paper does not present crowdsourcing experiments or research with human subjects.
    \item[] Guidelines:
    \begin{itemize}
        \item The answer NA means that the paper does not involve crowdsourcing nor research with human subjects.
        \item Including this information in the supplemental material is fine, but if the main contribution of the paper involves human subjects, then as much detail as possible should be included in the main paper. 
        \item According to the NeurIPS Code of Ethics, workers involved in data collection, curation, or other labor should be paid at least the minimum wage in the country of the data collector. 
    \end{itemize}

\item {\bf Institutional review board (IRB) approvals or equivalent for research with human subjects}
    \item[] Question: Does the paper describe potential risks incurred by study participants, whether such risks were disclosed to the subjects, and whether Institutional Review Board (IRB) approvals (or an equivalent approval/review based on the requirements of your country or institution) were obtained?
    \item[] Answer: \answerNA{}
    \item[] Justification: The paper does not perform experiments on human subjects.
    \item[] Guidelines:
    \begin{itemize}
        \item The answer NA means that the paper does not involve crowdsourcing nor research with human subjects.
        \item Depending on the country in which research is conducted, IRB approval (or equivalent) may be required for any human subjects research. If you obtained IRB approval, you should clearly state this in the paper. 
        \item We recognize that the procedures for this may vary significantly between institutions and locations, and we expect authors to adhere to the NeurIPS Code of Ethics and the guidelines for their institution. 
        \item For initial submissions, do not include any information that would break anonymity (if applicable), such as the institution conducting the review.
    \end{itemize}

\item {\bf Declaration of LLM usage}
    \item[] Question: Does the paper describe the usage of LLMs if it is an important, original, or non-standard component of the core methods in this research? Note that if the LLM is used only for writing, editing, or formatting purposes and does not impact the core methodology, scientific rigorousness, or originality of the research, declaration is not required.
    \item[] Answer: \answerYes{}
    \item[] Justification: The paper introduces a reward model architecture that can be implemented using an LLM. In our experiments we used Gemma 1.1 2B~\cp{gdm2024gemma} to implement the proposed model. The usage of Gemma is described in details in Section~\ref{sec:experiments} and Appendix~\ref{seca:experiment_details}. No LLMs were used in the writing of the paper itself.
    \item[] Guidelines:
    \begin{itemize}
        \item The answer NA means that the core method development in this research does not involve LLMs as any important, original, or non-standard components.
        \item Please refer to our LLM policy (\url{https://neurips.cc/Conferences/2025/LLM}) for what should or should not be described.
    \end{itemize}

\end{enumerate}

\end{document}